\newcommand{\thetitle}{Provably Strict Generalisation Benefit for Equivariant Models}
\icmltitlerunning{\thetitle}
\theoremstyle{definition}
\newtheorem{theorem}{Theorem}
\newtheorem{lemma}[theorem]{Lemma}
\newtheorem{example}[theorem]{Example}
\newtheorem{proposition}[theorem]{Proposition}
\newtheorem{remark}[theorem]{Remark}
\newtheorem*{theorem*}{Theorem}
\newtheorem*{proposition*}{Proposition}
\newtheorem*{definition*}{Definition}
\newtheorem*{question*}{Question}
\newtheorem*{claim*}{Claim}
\newcommand\argmin{\operatorname*{argmin}}
\DeclarePairedDelimiterX{\twobarparen}[2]{(}{)}{#1\;\delimsize\|\;#2}
\DeclarePairedDelimiterX{\inner}[2]{\langle}{\rangle}{#1, #2}
\DeclarePairedDelimiterX{\binner}[2]{\lparen}{\rparen}{#1 \vert{} #2}
\let\grad\nabla
\newcommand{\dd}{\mathop{}\!\mathrm{d}}
\newcommand{\abs}[1]{\left\vert{} #1 \right\vert{}}
\newcommand{\G}{\mathcal{G}}
\newcommand{\T}{\mathcal{T}}
\newcommand{\F}{\mathcal{F}}
\DeclareMathOperator{\E}{\mathbb{E}}
\newcommand{\var}[1]{\text{Var}\left[#1 \right]}
\newcommand{\indep}{\perp\!\!\!\perp}
\DeclareMathOperator{\Unif}{\text{Unif}}
\newcommand{\n}{\mathcal{N}}
\newcommand{\iid}{i.i.d.}
\newcommand{\SO}{\text{SO}}
\newcommand{\GL}{\text{GL}}
\renewcommand{\H}{\mathcal{H}}
\newcommand{\N}{\mathbb{N}}
\newcommand{\R}{\mathbb{R}}
\newcommand{\Y}{\mathcal{Y}}
\newcommand{\X}{\mathcal{X}}
\newcommand{\norm}[1]{\lVert#1\rVert}
\newcommand{\lnorm}[1]{\left\lVert#1\right\rVert}
\newcommand{\fnorm}[1]{\norm{#1}_\text{F}}
\newcommand{\vc}[1]{\text{VC}(#1)}
\DeclareMathOperator{\Rad}{\widehat{\mathfrak{R}}}
\DeclareMathOperator{\ERad}{\mathfrak{R}}
\DeclareMathOperator{\tr}{Tr} % trace
\newcommand{\ee}{\mathsf{e}}
\newcommand{\eqas}{\overset{\text{a.s.}}{=}}
\newcommand{\eqdist}{\overset{\text{d}}{=}}
\newcommand{\sperp}{{\scaleobj{0.75}{\perp}}}
\renewcommand{\cite}{\citep}
\newcommand{\Esig}{\E_{\bm{\sigma}}}
\newcommand{\x}{{\bm{X}}}
\newcommand{\y}{\bm{Y}}
\renewcommand{\O}{\mathcal{O}}
\newcommand{\q}{\mathcal{Q}}
\newcommand{\vlin}{{V_\text{lin}}}
\newcommand{\wlin}{{W_\text{lin}}}
\newcommand{\Gfin}{\G_{\text{fin}}}
\newcommand{\Hom}{\text{Hom}}
\newcommand{\Gr}{\mathbb{G}}
\newcommand{\twine}{{\Psi_\G}}
\newcommand{\proj}{{\Phi_\G}}
\def\submission{0}
\newenvironment{supplementary}{}{}
\newcommand{\titlestuff}{
\icmltitle{\thetitle}

\icmlsetsymbol{equal}{*}

\begin{icmlauthorlist}
\icmlauthor{Bryn Elesedy}{cs}
\icmlauthor{Sheheryar Zaidi}{stats}
\end{icmlauthorlist}

\icmlaffiliation{cs}{Department of Computer Science, University of Oxford, Oxford, United Kingdom}
\icmlaffiliation{stats}{Department of Statistics, University of Oxford, Oxford, United Kingdom}

\icmlcorrespondingauthor{Bryn Elesedy}{bryn@robots.ox.ac.uk}

\icmlkeywords{Equivariance, Generalisation, Generalization, Machine Learning, ICML}

\vskip 0.3in
}
\begin{document}

\if\submission1
    \def\figwidth{0.45\textwidth}
    \excludecomment{proof}
    \excludecomment{supplementary}
    \twocolumn[\titlestuff]
\else
    \def\figwidth{0.85\textwidth}
    \onecolumn
    \titlestuff
\fi

\printAffiliationsAndNotice{}  

\begin{abstract}
    It is widely believed that engineering a model to be invariant/equivariant improves generalisation.
    Despite the growing popularity of this approach,
    a precise characterisation of the generalisation benefit is lacking.
    By considering the simplest case of linear models,
    this paper provides the first \emph{provably non-zero} improvement in generalisation
    for invariant/equivariant models 
    when the target distribution is
    invariant/equivariant
    with respect to a compact group.
    Moreover, our work reveals an interesting relationship between
    generalisation, the number of training examples and properties of the 
    group action.
    Our results rest on an observation of the structure of function spaces under
    averaging operators which, along with its consequences for feature
    averaging, may be of independent interest.
\end{abstract}

\section{Introduction}
There is significant and growing interest in models,
especially neural networks, that are invariant or equivariant to the action of a group 
on their inputs.
It is widely believed that these models enjoy improved generalisation when the group is correctly specified.
The intuition being that if the salient aspects of a task are unchanged by some transformation,
then more flexible models would need to learn (as opposed to being hard-coded) to
ignore these transformations, requiring more samples to generalise.
Work in this area has progressed quickly~\cite{cohen2016group,cohen2018spherical,cohen2019general}
and has found application in domains where the symmetry is known a priori, for instance
in particle physics~\cite{pfau2020ab}.

In contrast with practical successes, our theoretical understanding of invariant/equivariant models is limited.
Many previous works that have attempted to address the generalisation of invariant/equivariant models,
such as~\citet{sokolic2017generalization,sannai2019improved},
cover only the worst case performance of algorithms.
These works use complexity measures to find tighter upper bounds on the test
risk for invariant/equivariant models, but a strict benefit is not demonstrated.
The VC dimension governs distribution independent generalisation, 
so these complexity measures 
can show no more than a separation between the VC dimensions of a model
and its invariant/equivariant version.
This would imply the existence of a distribution on
which the model with smaller VC dimension will generalise better,
but would not rule out that on many common distributions
training an invariant/equivariant model using standard procedures 
provides no benefit.
For instance, there could be many invariant/equivariant distributions on which
SGD automatically favours parameters that result in (possibly approximately)
invariant/equivariant predictors, regardless of architecture.

The results of this paper move to address this issue, by quantifying exactly the generalisation
benefit of invariant/equivariant linear models.
We do this in the case of the minimum $L_2$-norm least-squares solution, which reflects the implicit
bias of gradient descent in overparameterised linear models.
While the linear model provides a tractable first-step towards understanding more complex models such 
as neural networks, the underlying ideas of this paper are equally applicable to non-linear predictors.
We emphasise this by providing new perspectives on feature averaging and 
suggestions for how to apply the ideas of this paper to find new methods for training
invariant/equivariant neural networks.

\subsection{Our Contributions}
The main result of this paper is~\Cref{thm:equivariant-regression}, which 
quantifies the generalisation benefit of equivariance in a linear model.
We define the \emph{generalisation gap} $\Delta(f, f')$
between predictors $f$ and $f'$ to be the difference in their test errors on a given task.
A positive generalisation gap $\Delta(f, f') > 0$ means that $f'$ has strictly smaller test error
than $f$.
\Cref{thm:equivariant-regression} concerns 
$\Delta(f, f')$ in the case that $f: \R^d \to \R^k$ is the minimum-norm least-squares predictor
and $f'$ is its equivariant version. 
Let a compact group $\G$ act via orthogonal representations $\phi$ and $\psi$ on
inputs $X \sim \n(0, I_d)$ and outputs $Y = h(X) + \xi \in \R^k$ respectively,
where $h: \R^d \to \R^k$ is an equivariant linear map.
Let $\binner{\chi_\psi}{\chi_\phi}= \int_\G  \tr(\psi(g))\tr(\phi(g))\dd \lambda(g)$ denote
the scalar product of the characters of the representations. 
The generalisation benefit of enforcing equivariance in a linear model is given by
\[
    \E[\Delta(f, f')] = \var{\xi} r(n, d)(dk - \binner{\chi_\psi}{\chi_\phi}) + \mathcal{E}_\G(n, d)
\]
where
\[
    r(n, d) = \begin{cases}
        {\frac{n}{d(d - n - 1)}} & n < d- 1\\
        (n - d - 1)^{-1} & n > d + 1\\
        \infty & \text{otherwise}
    \end{cases}
\]
and $\mathcal{E}_\G(n, d) \ge 0 $ is the generalisation gap of the corresponding noiseless problem, that
vanishes when $n \ge d$.
The divergence at the interpolation threshold $n \in [d-1, d+1]$ 
is consistent with the double descent literature~\cite{hastie2019surprises}. 

The quantity $dk - \binner{\chi_\psi}{\chi_\phi}$ 
represents the significance of the group symmetry to
the task. The dimension of the space of linear maps $\R^d \to \R^k$ is $dk$,
while $\binner{\chi_\psi}{\chi_\phi}$ is the dimension of the space of equivariant linear maps. 
We will see later that the quantity $dk - \binner{\chi_\psi}{\chi_\phi}$ 
represents the dimension of the space of linear maps that vanish when averaged over $\G$;
it is through the dimension of this space that the symmetry in the task controls the generalisation gap.
Although invariance is a special case of equivariance, we find it instructive to
discuss it separately. In~\Cref{thm:invariant-regression} we provide
a result that is analogous to~\Cref{thm:equivariant-regression} for invariant predictors,
along with a separate proof.

In order to arrive at~\Cref{thm:invariant-regression,thm:equivariant-regression} we
make use of general results about the structure of function spaces under averaging operators.
In~\Cref{sec:decomposition} we show how averaging operators can be used to decompose
function spaces into orthogonal subspaces of symmetric (invariant/equivariant) 
and anti-symmetric (vanish when averaged) functions.
In~\Cref{sec:fa} we use these insights to provide new perspectives on feature averaging.
Our main results are in~\cref{sec:lower-bounds}.
Finally, in~\Cref{sec:neural-networks} we apply our insights to derive principled 
methods for training invariant/equivariant neural networks and provide open questions
for future work.

\section{Related Work}
\paragraph{Implementations}
While there has been a recent surge in interest,
symmetry is not a new concept in machine learning.
Recent literature is dominated by neural networks, but other methods do exist:
e.g.~kernels~\cite{haasdonk05invariancein}, support
vector machines~\cite{scholkopf96incorporatinginvariances} or feature 
spaces such as polynomials~\cite{schulz1994constructing,schulz1992existence}.
The engineering of invariant neural networks dates back at least to~\citet{wood1996representation},
in which ideas from representation theory are applied to find weight tying schemes that result
in group invariant architectures; similar themes are present in~\citet{ravanbakhsh2017equivariance}.
Recent work follows in this vein,
borrowing ideas from fundamental physics to construct invariant/equivariant
convolutional architectures~\cite{cohen2016group,cohen2018spherical}.
Correspondingly, a sophisticated theory of invariant/equivariant
networks has arisen~\cite{kondor2018generalization,cohen2019general}
including universal approximation results~\cite{maron2019universality,yarotsky2018universal}.

\paragraph{Learning and Generalisation}
The intuition that invariant or equivariant models are more sample efficient
or generalisable is widespread in the literature, but arguments are often heuristic and,
to the best of our knowledge, a provably strict (non-zero) generalisation benefit has not appeared before this paper.
It was noted~\cite{abu1993hints} that constraining a model to be invariant cannot increase its
VC dimension.
An intuitive argument for reduced sample complexity is made in~\citet{mroueh2015learning}
in the case that the input space has finite cardinality.
The sample complexity of linear classifiers with invariant representations
trained on a simplified image task is discussed briefly in~\citet{anselmi2014unsupervised},
the authors conjecture that a general result may be obtained using wavelet transforms.
The framework of robustness~\cite{xu2012robustness} is used in~\citet{sokolic2017generalization}
to obtain a generalisation bound for interpolating large-margin classifiers that are
invariant to a finite set of transformations;
note that the results contain an implicit margin constraint on the training data.
The generalisation of models invariant or equivariant
to finite permutation groups is considered in~\citet{sannai2019improved}.
Both of~\citet{lyle2019analysis,lyle2020benefits} cover the PAC Bayes approach to generalisation of invariant models,
the latter also considers the relative benefits of feature averaging and data augmentation.

\section{Preliminaries}
We assume familiarity with the basic notions of group theory, 
as well as the definition of a group action and a linear representation.
The reader may consult~\citet[Chapters 1-4]{wadsleyrep2012,serre1977linear} for background.
We define some key concepts and notation here and introduce more as necessary throughout the paper.

\paragraph{Notation and Technicalities}
We write $\X$ and $\Y$ for input and output spaces respectively.
We assume for simplicity that $\Y =(\R^k, +)$ is a $k$-dimensional
real vector space (with $k$ finite) but
we expect our results to apply in other settings too.
Throughout the paper, $\G$ will represent an
arbitrary compact group that has a measurable action $\phi$ on $\X$ and a representation $\psi$ on $\Y$.
By this we mean that $\phi: \G \times \X \to \X$ is a measurable map and the same for $\psi$.
We sometimes write $gx$ as a shorthand for $\phi(g) x$ and similarly for actions on $\Y$.
Some notation for specific groups: $C_m$ and $S_m$ are, respectively, the cyclic and symmetric groups on $m$ elements;
while $O(m)$ and $SO(m)$ are the $m$-dimensional orthogonal and special orthogonal groups respectively.
For any matrix $A$ we write 
$A^+$ for the Moore-Penrose pseudo-inverse
and $\fnorm{A} = \sqrt{\tr(A^\top A)}$ for the Frobenius/Hilbert-Schmidt norm.
We write $\Gr_n(\R^d)$ for the Grassmannian manifold of subspaces of dimension $n$ in $\R^d$.
The results of this paper require some mild care with technical considerations
such as topology/measurability.
We do not stress these in the main paper but they do appear
in the proofs, %
\if\submission1
all of which are deferred to the supplementary material. 
\else
some of which are deferred to the appendix.
\fi

\paragraph{Invariance, Equivariance and Symmetry}
A function $f: \X \to \Y$ is $\G$-invariant if
$ f(\phi(g)x) = f(x)$ $\forall x\in \X$ $\forall g\in \G$
and is $\G$-equivariant if
$ f(\phi(g)x) = \psi(g)f(x)$ $\forall x\in \X$ $\forall g\in \G$.
A measure $\mu$ on $\X$ is $\G$-invariant 
if $\forall g \in \G$ and any $\mu$-measurable $B \subset \X$ 
the pushforward of $\mu$ by the action $\phi$ equals $\mu$, i.e.~$(\phi(g)_* \mu)(B) = \mu(B)$.
This means that if $X \sim \mu$ then $\phi(g) X \sim \mu$ $\forall g \in \G$.
We will sometimes use the catch-all term symmetric to describe an object that is invariant
or equivariant.

\section{Symmetric and Anti-Symmetric Functions}\label{sec:decomposition}
Averaging the inputs of a function over the orbit of a group
is a well known method to enforce invariance, for instance see~\citet{schulz1994constructing}.
Approaching this from another perspective, averaging can also be used to identify invariance.
That is, a function is $\G$-invariant if and only if it is preserved by orbit averaging with
respect to $\G$. The same can be said for equivariant functions, using a modified average.
After introducing the relevant concepts, we will
use this observation and other properties of the averaging operators 
to decompose function spaces into mutually orthogonal
symmetric (invariant/equivariant) and anti-symmetric (vanish when averaged) subspaces.
This observation provides the foundation for many results later in the paper.

\subsection{Setup}
\paragraph{Haar Measure}
Let $\G$ be a compact group. The Haar measure is the unique invariant
measure on $\G$ and we denote it by $\lambda$.
By invariance we mean that for any measurable subset $A \subset \G$ and for any $g\in \G$,
$\lambda(gA) = \lambda(Ag) = \lambda(A)$. 
We assume normalisation such that $\lambda(\G) = 1$, which is always possible when $\G$ is compact.
The (normalised) Haar measure can be interpreted as the uniform distribution on $\G$.
See~\citet{kallenberg2006foundations} for more details.

\paragraph{Orbit Averaging}
For any feature map $f:\X \to \Y$,
we can construct a $\G$-invariant feature map by averaging with respect to $\lambda$.
We represent this by the operator $\O$, where
\[
    (\O f)(x) = \int_\G  f(gx)\dd \lambda(g).
\]
Similarly, if $\psi$ is a representation of $\G$ on $\Y$, we can transform $f$
into an equivariant feature map by applying $\q$, where
\[
    (\q f)(x) = \int_\G  \psi(g^{-1}) f(gx)\dd \lambda(g).
\]
Notice that $\O$ is a special case of $\q$ corresponding to 
$\psi$ being the trivial representation.
The operator $\O$ can be thought of as performing feature averaging with respect to $\G$.
This interpretation is widely adopted, for instance appearing in~\citet{lyle2020benefits}.

\paragraph{Function Spaces}
We now show how to construct the relevant spaces of functions.
We present this in an abstract way, but these functions can be interpreted 
as predictors, feature maps, feature extractors and so on.
Let $\mu$ be a $\G$-invariant measure on $\X$ and let $\inner{a}{b}$ for $a, b \in \Y $ be an inner product
on $\Y= \R^k$ that is preserved by $\psi$.%
\footnote{
    It is permissible for inner product itself to depend on the point $x$
    at which the feature maps are evaluated.
    The only requirement is that evaluating the inner product between
    two fixed vectors is a $\G$-invariant function $\inner{a}{b}(x) = \inner{a}{b}(gx)$,
    $\forall a, b \in \R^k$, $g \in \G$ and $x \in \X$.
    We believe that this allows our results to extend to the
    case of the features defined as maps from a manifold to its tangent
    bundle.}
By preserved we mean that
$\inner{\psi(g)a}{\psi(g)b} = \inner{a}{b}$, $\forall g\in \G$, $\forall a, b\in \Y$
and any inner product can be transformed to satisfy this property using the Weyl trick
$
    \inner{a}{b} \mapsto \int_\G\inner{ \psi(g)a }{ \psi(g)b }\dd\lambda(g)
    $.
Given two functions $f, h: \X \to \Y$, we define their inner product by
\[
    \inner{f}{h}_\mu = \int_\X \inner{f(x)}{h(x)} \dd \mu (x).
\]
This inner product can be thought of as comparing the similarity between functions
and can used to define a notion of distance with the norm $\norm{f}_\mu = \sqrt{\inner{f}{f}}_\mu$.
We then define $V$ as the space of all (measurable) functions 
$f: \X \to \Y$ such that $\norm{f}_\mu < \infty$.\footnote{Equality is defined $\mu$-almost-everywhere.}
Formally, $V$ is a Bochner space.

\subsection{Averaging and the Structure of Function Spaces}
We have seen how to define orbit averaging operators to produce invariant and equivariant functions as
well as how to construct spaces of functions on which these operators can act.
The reason for all of this is the following result, which shows that the averaging operators
allow us to decompose any function in $V$ into orthogonal $\G$-symmetric and $\G$-anti-symmetric parts.
Recall that since $\O$ is just a special case of $\q$,~\Cref{thm:fa} applies to both operators.

\begin{lemma}\label{thm:fa}
    Let $U$ be any subspace of $V$ that is closed under $\q$.
    Define the subspaces $S$ and $A$ of, respectively, the $\G$-symmetric and $\G$-anti-symmetric functions in
    $U$:
    $
        S = \{f \in U: \text{$f$ is $\G$-equivariant}\} 
        $
        and
        $
        A = \{f \in U: \q f = 0 \}.
        $
    Then $U$ admits admits an orthogonal decomposition into symmetric and anti-symmetric parts
    \[
        U = S \oplus A.
    \]
\end{lemma}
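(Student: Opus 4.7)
The plan is to show that the operator $\q$, restricted to $U$, is an orthogonal projection onto $S$ with kernel $A$; the orthogonal decomposition $U = S \oplus A$ then follows immediately from elementary Hilbert-space theory. Concretely, I want to verify three properties of $\q$: that its image consists of equivariant functions, that it is idempotent ($\q^2 = \q$), and that it is self-adjoint with respect to $\inner{\cdot}{\cdot}_\mu$.

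For the image, I would fix $f \in U$ and $h \in \G$ and compute $(\q f)(hx) = \int_\G \psi(g^{-1}) f(ghx)\dd\lambda(g)$; substituting $g' = gh$ and invoking left-invariance of $\lambda$ pulls out a factor of $\psi(h)$ and leaves $(\q f)(x)$, so $\q f$ is equivariant. For idempotency, I would observe that if $f$ is already equivariant then $\psi(g^{-1})f(gx) = f(x)$ for every $g$, so $\q f = f$; combined with the first step this gives $\q^2 = \q$. For self-adjointness, I expand
\[
    \inner{\q f}{h}_\mu = \int_\X\!\int_\G \inner{\psi(g^{-1})f(gx)}{h(x)}\dd\lambda(g)\dd\mu(x),
\]
use that $\psi$ preserves the inner product (pointwise in $x$, by the footnote's assumption) to move $\psi(g)$ across to the second argument, apply Fubini, and then change variables $y = gx$ using $\G$-invariance of $\mu$. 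A final substitution $g \mapsto g^{-1}$, which leaves $\lambda$ invariant because $\G$ is compact and hence unimodular, delivers $\inner{f}{\q h}_\mu$.

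With these three properties in hand, the decomposition is routine. For any $f \in U$, write $f = \q f + (f - \q f)$. Since $U$ is closed under $\q$, both summands lie in $U$. The first term is equivariant by the image step, so it lies in $S$; applying $\q$ to the second term and using idempotency gives $\q(f - \q f) = 0$, so it lies in $A$. Orthogonality of $S$ and $A$ follows from self-adjointness: for any $s \in S$ and $a \in A$, $\inner{s}{a}_\mu = \inner{\q s}{a}_\mu = \inner{s}{\q a}_\mu = 0$. This exhausts what is needed for $U = S \oplus A$.

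The main obstacle is the self-adjointness step, since it is the only place that genuinely uses all of the structural assumptions (invariance of $\mu$, $\psi$-invariance of the inner product, compactness/unimodularity of $\G$, and integrability) and the only place where measurability and Fubini need to be justified carefully. The rest is essentially algebra once the operator calculus on $\q$ is in place.
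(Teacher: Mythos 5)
Your proposal is correct and follows essentially the same strategy as the paper: verify that $\q$ restricted to $U$ is an idempotent, self-adjoint operator whose image is $S$ and kernel is $A$, and then read off the orthogonal decomposition. One tiny terminological slip: the substitution $g' = gh$ in the equivariance step uses \emph{right}-invariance of $\lambda$, not left-invariance, though this is immaterial since compact groups are unimodular.
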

\begin{supplementary}
See~\Cref{sec:proofs-appdx} for the proof.
\end{supplementary}
The proof consists of establishing the following properties:
\begin{enumerate*}[(A)]
    \item for any $f \in V$, $\q f \in V$;
    \item any $f \in V$ is $\G$-equivariant if and only if $\q f =f$;
    \item $\q$ has only two eigenvalues, $1$ and $0$; and,
    \item $\q$ is self-adjoint with respect to $\inner{\cdot}{\cdot}_\mu$.
\end{enumerate*}
The last of these is critical and depends on the $\G$-invariance of $\mu$.
There are many tasks for which $\G$-invariance of the input distribution is a 
natural assumption, for instance in medical imaging~\cite{winkels20183d}.

\Cref{thm:fa} says that any function $u \in U$ can be written $u = s + a$,
where $s$ is $\G$-equivariant, $\q a  = 0$ and $\inner{s}{a}_\mu = 0$.
We refer to $s$ and $a$ as the symmetric and anti-symmetric parts of $u$.
In general this does not imply that $a$ is odd, that it 
outputs an anti-symmetric matrix or that it is negated by
swapping two inputs. 
These are, however, special cases.
If $\G = C_2$ acts by $x\mapsto -x$ then odd functions $f: \R
\to \R$ will be anti-symmetric in the sense of this paper. If $\G=C_2$ acts on matrices by $M \mapsto
M^\top$ then $f: M \mapsto \frac12 (M - M^\top)$ is also anti-symmetric. Finally, if
$\G = S_n$ and $f: \R^n \to \R$ with $f(x_1, \dots, x_j, x_{j+1}, \dots, x_n) =
-f(x_1, \dots, x_{j+1}, x_{j}, \dots, x_n)$, then $f$ is anti-symmetric in the sense of this paper.

Although it is straightforward to demonstrate
and has surely been observed before, we will see in the rest of the paper
that the perspective provided by~\Cref{thm:fa} is highly fruitful.
Before that, we conclude this section with an example for intuition.
\begin{example}\label{example:rotations}
    Let $V$ consist of all functions $f: \R^2 \to \R$ such that $\E[f(X)^2] < \infty$
    where $X \sim \n(0, I_2)$.
    Let $\G = \SO(2)$ act by rotation about the origin, with respect to which
    the normal distribution is invariant.
    Using~\Cref{thm:fa} we may write $V = S \oplus A$.
    Alternatively, consider polar coordinates $(r, \theta)$, then for any feature map $f(r, \theta)$
    we have $\O f (r, \theta) = \frac{1}{2\pi} \int_0^{2\pi} f(r, \theta)\dd \theta $. So any $\G$-invariant
    feature map (i.e.~anything in $S$) depends only on the radial coordinate.
    Similarly, any $h$ for which $\O h = 0$ must have 
    $\O h (r, \theta)  = \frac{1}{2\pi} \int_0^{2\pi}h(r, \theta) \dd \theta  = 0$ for any $r$,
    and $A$ consists entirely of such functions. For example, $r^{3} \cos \theta \in A$.
    We then recover $\inner{s}{h}_\mu = \frac{1}{2\pi}\int_\X s(r) h(r, \theta)\ee^{-r^2/2}r \dd r \dd \theta  = 0$ for any $s \in S$
    by integrating $h$ over $\theta$.
    Intuitively, one can think of the functions in $S$ as varying perpendicular to the flow
    of $\G$ on $\X = \R^2$ and so are preserved by it, while the functions in $A$ average to $0$ along this flow,
    see~\cref{fig:orbit}.
\end{example}

\begin{figure}[h]
    \centering
    \includegraphics[width=\figwidth]{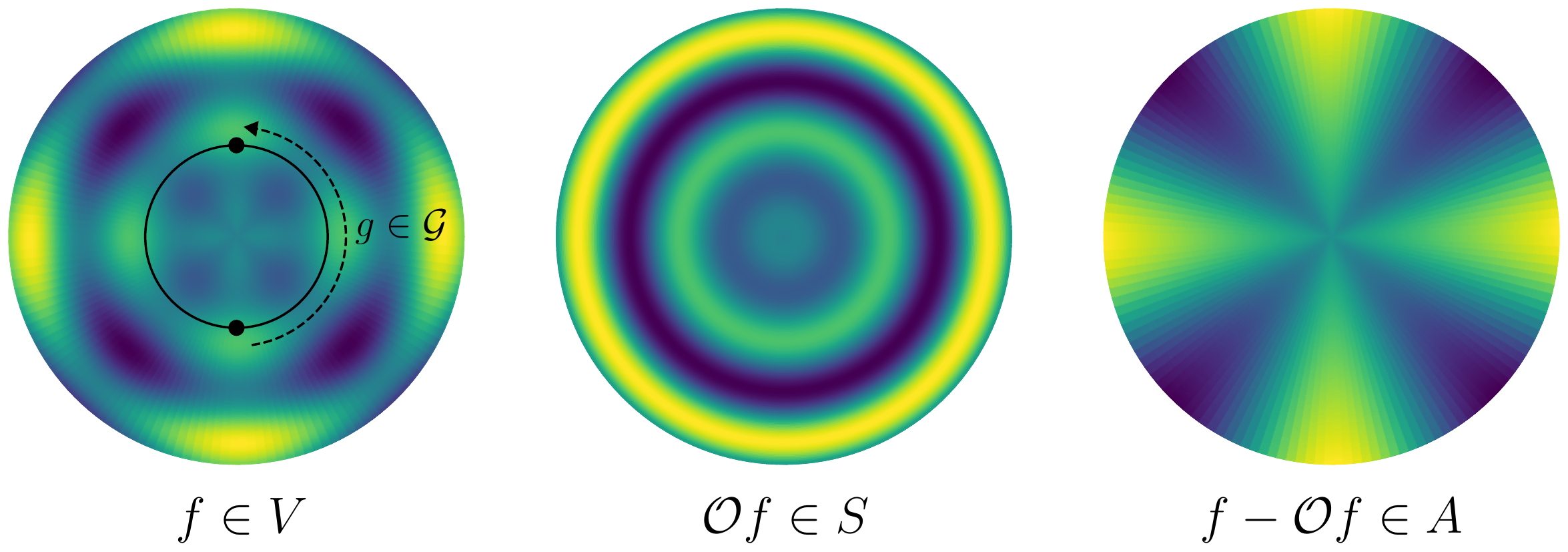}
    \vspace{-10pt}
    \caption{Example of a function decomposition.
    The figure shows $f(r, \theta) = r \cos{(r - 2\theta)} \cos{(r + 2\theta)}$
    decomposed into its symmetric and anti-symmetric parts in $V = S \oplus A$
    under the natural action of $\G = \SO(2)$ on $\R^2$.
    See~\Cref{example:rotations}. Best viewed in colour.}\label{fig:orbit}
\end{figure}

\section{Feature Averaging}\label{sec:fa}
We remarked earlier that $\O$ can be thought of as performing feature averaging.
Before our study of the generalisation of symmetric models, we use this interpretation to derive 
our first consequence of~\Cref{thm:fa}. 
We show that feature averaging can be viewed as 
solving a least squares problem in the space of features extractors $V$.
That is, feature averaging sends $f$ to $\bar{f}$, where $\bar{f}$ is the \emph{closest $\G$-invariant feature 
extractor to $f$}.

\newcommand{\thmleastsquares}{%
    Let $V$ be the space of all normalisable feature extractors as defined above.
    Define $S$ and $A$ as in~\Cref{thm:fa}.
    For any $f \in V$, feature averaging with $\O$ maps $f \mapsto \bar{f}$ where $\bar{f}$
    is the ($\mu$-a.e.) unique solution to the least-squares problem
    \[
        \bar{f} = \argmin_{s \in S} \norm{f - s}^2_\mu.
    \]%
}   
\begin{proposition}[Feature Averaging as a Least-Squares Problem]\label{thm:least-squares}
    \thmleastsquares{}
\end{proposition}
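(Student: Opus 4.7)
The plan is to reduce the claim to a standard orthogonal projection argument using the decomposition $V = S \oplus A$ established in~\Cref{thm:fa} (specialised to the trivial representation, so that $\q = \O$ and $S$ is the space of $\G$-invariant feature extractors). The key observation is that $\O$ is precisely the projection onto the $S$-factor of this decomposition: for any $f \in V$, properties (B) and (C) of the proof sketch of~\Cref{thm:fa} give $\O(\O f) = \O f$, so $\O f \in S$, while the complementary component $f - \O f$ satisfies $\O(f - \O f) = 0$, hence lies in $A$.

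First I would write $f = \O f + (f - \O f)$, noting this is the symmetric/anti-symmetric decomposition guaranteed by~\Cref{thm:fa}. Then for any candidate $s \in S$, I would decompose
\[
    f - s = (\O f - s) + (f - \O f),
\]
where $\O f - s \in S$ (since $S$ is a subspace) and $f - \O f \in A$. Since $S \perp A$ in the $\inner{\cdot}{\cdot}_\mu$ inner product, Pythagoras gives
\[
    \norm{f - s}_\mu^2 = \norm{\O f - s}_\mu^2 + \norm{f - \O f}_\mu^2.
\]
The second term does not depend on $s$, so the minimum is attained exactly when $\norm{\O f - s}_\mu^2 = 0$, i.e.~$s = \O f$ $\mu$-a.e., which proves both existence and $\mu$-a.e.~uniqueness of the minimiser.

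The only step that requires any care is checking that $\O f \in V$ so that the orthogonality machinery of~\Cref{thm:fa} actually applies; this is property (A) in the proof sketch of~\Cref{thm:fa} and follows from Jensen's inequality together with the $\G$-invariance of $\mu$, which controls $\norm{\O f}_\mu \le \norm{f}_\mu$. Once that is in hand, there is no real obstacle: the argument is just the fact that orthogonal projection onto a closed subspace of a Hilbert space realises the nearest point, applied to the concrete subspace $S$ with projector $\O$. I expect the only subtlety in a formal write-up to be the almost-everywhere qualifier on uniqueness, which is inherent in the definition of $V$ as a Bochner space where equality is defined $\mu$-a.e.
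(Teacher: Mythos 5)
Your proof is correct and follows the same strategy as the paper's for the existence part: decompose $f = \O f + (f - \O f)$ via \Cref{thm:fa} and use the orthogonality of $S$ and $A$ to write $\norm{f - s}_\mu^2$ as a Pythagorean sum. Where you depart is uniqueness: you simply observe that the identity $\norm{f - s}_\mu^2 = \norm{\O f - s}_\mu^2 + \norm{f - \O f}_\mu^2$ already forces $s = \O f$ $\mu$-a.e.\ at the minimum, whereas the paper re-derives uniqueness by a separate convexity argument (forming the midpoint $s_{1/2}=\tfrac12(s+s')$ and applying Cauchy--Schwarz). Your route is cleaner: it avoids a second pass, and it sidesteps a small arithmetic slip in the paper's midpoint computation, where the cross term in $\norm{(f-s)/2 + (f-s')/2}_\mu^2$ should carry a coefficient of $\tfrac12$ rather than $\tfrac14$; with the corrected coefficient the paper's bound becomes non-strict and one must invoke the equality case of Cauchy--Schwarz (or the parallelogram law) to finish, none of which is needed in your version.
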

\begin{supplementary}
The proof of~\cref{thm:least-squares} is a straightforward exercise,
so we postpone it to~\cref{sec:proof-least-squares}.
\end{supplementary}

\begin{example}
    Consider again the setting of~\cref{example:rotations}.
    For simplicity, let $f(r, \theta)= f_{\text{rad}}(r)f_{\text{ang}}(\theta)$ be separable in polar coordinates.
    Notice that $\O f = c_f f_{\text{rad}}$ where $c_f  = \frac{1}{2\pi} \int_0^{2\pi} f_\text{ang}( \theta)\dd \theta $.
    Then for any $s \in S$ can calculate:
    \begin{align*}
        \norm{f - s}_\mu^2 
        &= \frac{1}{2\pi}\int_\X (f(r, \theta) - s(r))^2 \ee^{-r^2/2}r \dd r \dd \theta  \\
        &= \frac{1}{2\pi}\int_\X (f(r, \theta)- c_ff_{\text{rad}}(r))^2 \ee^{-r^2/2}r \dd r \dd \theta \\
        &\phantom{=}+ \frac{1}{2\pi}\int_\X(c_ff_{\text{rad}}(r) -s)^2 \ee^{-r^2/2}r \dd r \dd \theta 
    \end{align*}
    which is minimised by $s = c_f f_\text{rad}$, as predicted.
\end{example}

\subsection{Feature Averaging and Generalisation}\label{sec:fa-gen}
We end our discussion of feature averaging with an analysis of its impact on generalisation.
To do this we consider the reduction in the Rademacher complexity of a feature averaged class.

\paragraph{Rademacher Complexity}
Let $T = \{t_1, \dots, t_n \}$ be a collection of points of some space $\T$.
The empirical Rademacher complexity of a set $\F$ of functions $f: \T \to \R$
evaluated on $T$ is defined by
\[
    \Rad_T(\F)
    = \Esig\left[ \sup_{f \in \F}\abs{\frac1n  \sum_{i=1}^n \sigma_i f(t_i)}\right]
\]
where $\sigma_i \sim \Unif\{-1, 1\}$ for $i=1,\dots, n$ are Rademacher random variables over which
the expectation $\Esig$ is taken.
Let $\nu$ be a distribution on $\T$ and take $T \sim \nu^n$,
in which case the empirical Rademacher complexity $\Rad_T(\F)$ is a random quantity.
The Rademacher complexity $\ERad_n(\F)$ is defined by
taking an expectation over $T$:
$
    \ERad_n(\F) = \E[\Rad_T(\F)]
$.
The Rademacher complexity appears in the study of generalisation in statistical learning,
for instance see~\citet[Theorem 4.10 and Proposition 4.12]{wainwright2019high}.

\begin{proposition}\label{thm:fa-rademacher}
    Let $\G$ be a compact group acting measurably on a set $\T$.
    Let $\F$ be a class of functions $f: \T \to \R$ and define the symmetric and anti-symmetric classes
    $
        \overline{\F} = \{\O f : f \in \F\}
        $
        and
       $
       \F^\sperp = \{f - \O f: f \in \F\}
       $.
    Let $\nu$ be a distribution over $\T$ that is $\G$-invariant.
    Then the Rademacher complexity of the feature averaged class satisfies
   \[
        0 \le  \ERad_n (\F)  -  \ERad_n (\overline{\F})  \le  \ERad_n (\F^\sperp)
   \]
   where the expectations in the definition of $\ERad_n$ are taken over $t_i \sim \nu$ \iid.
\end{proposition}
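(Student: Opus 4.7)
The plan is to treat the two inequalities separately, exploiting the pointwise decomposition $f = \O f + (f - \O f)$ for the upper bound and the $\G$-invariance of $\nu$ together with Jensen's inequality for the lower bound.

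For the upper bound, the observation is that every $f \in \F$ splits as $f = \O f + (f - \O f)$ with $\O f \in \overline{\F}$ and $f - \O f \in \F^\sperp$ by definition. Hence for any realisation of $T$ and the Rademacher variables,
\[
    \left| \frac{1}{n}\sum_{i=1}^n \sigma_i f(t_i) \right|
    \le \left| \frac{1}{n}\sum_{i=1}^n \sigma_i (\O f)(t_i) \right|
    + \left| \frac{1}{n}\sum_{i=1}^n \sigma_i (f - \O f)(t_i) \right|
\]
by the triangle inequality. Taking the supremum over $f \in \F$ on both sides, using sub-additivity of the supremum, and then taking the expectation over $\bm{\sigma}$ and $T$ yields $\ERad_n(\F) \le \ERad_n(\overline{\F}) + \ERad_n(\F^\sperp)$, which rearranges to the right-hand inequality.

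For the lower bound, I would expand $\O f$ using its definition and push the Haar integral outside. Concretely, for fixed $T$ and $\bm{\sigma}$,
\[
    \sup_{f \in \F} \left| \frac{1}{n}\sum_{i=1}^n \sigma_i \int_\G f(g t_i)\dd \lambda(g) \right|
    \le \int_\G \sup_{f \in \F} \left| \frac{1}{n}\sum_{i=1}^n \sigma_i f(g t_i) \right| \dd \lambda(g),
\]
where I first apply Jensen's inequality inside the absolute value (the Haar integral is a probability average and $|\cdot|$ is convex) and then pass the supremum inside the outer Haar integral by monotonicity. Taking expectations (Fubini is legitimate because the integrands are non-negative) gives
\[
    \ERad_n(\overline{\F}) \le \int_\G \E_T \Esig\left[ \sup_{f \in \F}\left| \frac{1}{n}\sum_{i=1}^n \sigma_i f(g t_i) \right|\right] \dd \lambda(g).
\]
The crucial step is now to use that $\nu$ is $\G$-invariant: for each $g \in \G$, the vector $(g t_1, \dots, g t_n)$ has the same law as $(t_1, \dots, t_n)$, so the inner expectation equals $\ERad_n(\F)$ for every $g$. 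Since $\lambda$ is a probability measure this integrates to $\ERad_n(\F)$, yielding $\ERad_n(\overline{\F}) \le \ERad_n(\F)$, which is the left-hand inequality.

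The main obstacle is the lower bound, specifically the chain of exchanges (Jensen, supremum/integral swap, Fubini) together with the invocation of $\G$-invariance of $\nu$; each of these steps is routine in isolation but one must check that the relevant measurability holds so that all integrals and suprema are well-defined. The upper bound, by contrast, is essentially a one-line triangle-inequality argument.
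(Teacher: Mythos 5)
Your proof is correct and follows essentially the same route as the paper: the upper bound via the decomposition $f = \O f + (f - \O f)$, the triangle inequality and sub-additivity of the supremum; the lower bound via linearity of the Haar integral, Jensen's inequality on $\abs{\cdot}$, the supremum–integral swap, Fubini, and the $\G$-invariance of $\nu^n$. The only difference from the paper is cosmetic presentation; the paper additionally cites specific references (Kallenberg) for the measurability and Fubini steps you flag as needing verification.
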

\begin{supplementary}
See~\Cref{sec:proofs-appdx,sec:proof-fa-rademacher} for setup, proof and technicalities.
\end{supplementary}
\Cref{thm:fa-rademacher} says that the Rademacher complexity is reduced by feature averaging, but not by more than
the complexity of the anti-symmetric component of the class.
This can be thought of as quantifying the benefit of enforcing invariance by averaging in
terms of the extent to which the inductive bias is already present in the function class.
Although the form of this result is suggestive, it
does not imply a strict benefit.
We provide stronger results in the following section.

\section{Generalisation Benefit from Symmetric Models}\label{sec:lower-bounds}
In this section we apply~\Cref{thm:fa} to derive a strict (i.e.~non-zero) generalisation gap
between models that have and have not been specified to have the invariance/equivariance
that is present in the task.
We start with the following general result, which equates the generalisation gap
between any predictor and its closest equivariant function
to the norm of the anti-symmetric component of the predictor.

\begin{lemma}\label{lemma:gen-gap-antisym}
    Let $X \sim \mu$ where $\mu$ is a $\G$-invariant distribution on $\X$.
    Let $Y = f^*(X) + \xi \in \R^k$, where $\xi$ is a random element of $\R^k$
    that is
    independent of $X$ with zero mean and finite variance
    and $f^*: \X \to \R^k$ is $\G$-equivariant.
    Then, for any $f \in V$, the generalisation gap satisfies
    \if\submission1
    \begin{align*}
        \Delta(f, \q f) 
        &\coloneqq \E[\norm{Y - f(X)}_2^2] - \E[\norm{Y - \q f(X)}_2^2] \\
        &= \norm{f^\sperp}_\mu^2.
    \end{align*}
    \else
    \[
        \Delta(f, \q f) 
        \coloneqq \E[\norm{Y - f(X)}_2^2] - \E[\norm{Y - \q f(X)}_2^2] 
        = \norm{f^\sperp}_\mu^2.
    \]
    \fi
\end{lemma}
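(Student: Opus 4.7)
The plan is to apply Lemma~\ref{thm:fa} to $f$ to obtain the orthogonal decomposition $f = \q f + f^\sperp$, expand both squared-error expectations, and verify that every cross term vanishes---one using the orthogonality of symmetric and anti-symmetric functions under $\inner{\cdot}{\cdot}_\mu$, the other using independence of the noise.

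First I would apply Lemma~\ref{thm:fa} with $U = V$ to write $f = \q f + f^\sperp$, where $\q f \in S$ and $f^\sperp \coloneqq f - \q f \in A$. Since $f^*$ is $\G$-equivariant, the function $g \coloneqq f^* - \q f$ also lies in $S$. Then $Y - f(X) = g(X) + \xi - f^\sperp(X)$ while $Y - \q f(X) = g(X) + \xi$, so expanding and subtracting gives, pointwise,
\[
    \|Y - f(X)\|_2^2 - \|Y - \q f(X)\|_2^2 = \|f^\sperp(X)\|_2^2 - 2\inner{g(X) + \xi}{f^\sperp(X)}.
\]

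Next I would take expectations and analyse each of the three terms. The first yields $\|f^\sperp\|_\mu^2$ by definition of the Bochner norm (using that $\psi$ is orthogonal, so the $\psi$-invariant inner product on $\R^k$ can be taken to be the Euclidean one). The noise cross term $\E[\inner{\xi}{f^\sperp(X)}]$ vanishes by independence of $\xi$ from $X$ together with $\E[\xi] = 0$. The remaining cross term equals $\inner{g}{f^\sperp}_\mu$ and vanishes because $g \in S$ and $f^\sperp \in A$ are orthogonal by Lemma~\ref{thm:fa}; equivalently, one may invoke self-adjointness of $\q$ along with $\q g = g$ and $\q f^\sperp = 0$. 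Summing the three contributions recovers the claimed identity.

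The only point requiring care is checking that $\inner{g}{f^\sperp}_\mu$ is well-defined, i.e., that $g \in V$. This follows from $\q f \in V$ (property (A) used in proving Lemma~\ref{thm:fa}) together with $f^* \in V$, which we may assume without loss of generality: if $f^* \notin V$ then $\E[\|Y - \q f(X)\|_2^2]$ is already infinite and the statement is vacuous. Beyond this mild integrability bookkeeping the argument is purely algebraic, so I do not foresee a genuine obstacle---the work is entirely in recognising that the decomposition from Lemma~\ref{thm:fa} is exactly what kills the cross terms.
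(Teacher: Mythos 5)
Your proof is correct and follows essentially the same route as the paper's: decompose $f = \q f + f^\sperp$ via Lemma~\ref{thm:fa}, then observe that the noise cross term dies by independence and zero mean, while the cross term with the symmetric part $f^* - \q f$ dies by orthogonality of $S$ and $A$. The paper organises the algebra slightly differently (it first strips the noise from both squared errors, reducing to the noiseless gap, then kills the remaining cross term), but the underlying cancellations are identical, and your explicit check that $f^* - \q f \in V$ is a reasonable bit of extra bookkeeping that the paper leaves implicit.
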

\begin{proof}
    By \Cref{thm:fa}, we can decompose $f = \bar{f} + f^\sperp$ into its
    symmetric $\bar{f} = \q f$ and anti-symmetric $f^\sperp = f - \q f$ parts.
    Since $\xi$ is independent of $X$ with zero mean and finite variance, 
    \[
        \Delta(f, \q f) = \E[\norm{Y - f(X)}_2^2] - \E[\norm{Y - \bar{f}(X)}_2^2] 
        = \E[\norm{f^*(X) - f(X)}_2^2] - \E[\norm{f^*(X) - \bar{f}(X)}_2^2].
    \]
    Using the decomposition of $f$,
    \begin{align*}
        \E[\norm{f^*(X) - f(X)}_2^2] - \E[\norm{f^*(X) - \bar{f}(X)}_2^2] 
        &= -2\E[\inner{f^*(X) - \bar{f}(X)}{f^\sperp(X)}] + \E[\norm{f^\sperp(X)}_2^2] \\
        &= \norm{f^\sperp}_\mu^2.
    \end{align*}
    Here we relied on $\E[\inner{f^*(X) - \bar{f}(X)}{f^\sperp(X)}] =
    \inner{f^* - \bar{f}}{f^\sperp}_\mu = 0$, which follows from $f^*$ being
    $\G$-equivariant and hence orthogonal to $f^\sperp$.
\end{proof}

\Cref{lemma:gen-gap-antisym} demonstrates the existence of barrier in the generalisation 
of any predictor on a problem that has a symmetry.
Notice that the barrier turns out to be the measure of how well the predictor encodes the symmetry.
Clearly, the only way of overcoming this is to set $f^\sperp = 0$ ($\mu$-a.e.), which from~\Cref{thm:fa}
equivalent to enforcing $\G$-equivariance in $f$ ($\mu$-a.e.).
\Cref{lemma:gen-gap-antisym} therefore provides a \emph{strict generalisation benefit for equivariant predictors}.

In a sense, $\q f$ is the archetypal equivariant predictor to which $f$ should be compared.
A trivial extension to~\cref{thm:least-squares} shows that $\q f$ is the closest equivariant 
predictor to $f$ and, more importantly,
if $h$ is a $\G$-equivariant predictor with smaller test
risk than $\q f$ then $\Delta(f, h) = \Delta(f, \q f) + \Delta(\q f, h) \ge \Delta(f, \q f)$
which cannot weaken our result.

Later in this section we will use~\Cref{lemma:gen-gap-antisym}
to explicitly calculate the generalisation gap for invariant/equivariant linear models.
We will see that it displays a natural relationship
between the number of training examples and the dimension of the space of
anti-symmetric models $A$, which is a property of the group action.
Intuitively, the model needs enough examples to learn to be orthogonal to $A$.

This result also has a useful theoretical implication for test-time data
augmentation, which is commonly used to increase test accuracy
\cite{Simonyan15, Szegedy15, He16}. Test-time augmentation consists of
averaging the output of a learned function $f$ over random transformations of
the same input when making predictions at test-time. When the transformations
belong to a group $\G$ and are sampled from its Haar measure,
test-time averaging can be viewed as a Monte Carlo estimate of $\O f$.
\Cref{lemma:gen-gap-antisym} then shows that test-time
averaging is beneficial for generalisation when the target function is itself
$\G$-invariant, regardless of the learned function $f$.

\subsection{Regression with Invariant Target}
Let $\X = \R^d$ with the Euclidean inner product and $\Y = \R$.
Consider linear regression with the squared-error loss $\ell(y, y') = (y - y')^2$.
Let $\G$ be a compact group that acts on $\X$ via an orthogonal representation $\phi: \G \to O(d)$
and let $X \sim \mu$ where $\mu$ is now an arbitrary $\G$-invariant probability distribution
on $\X$ with $\Sigma \coloneqq \E[X X^\top]$ finite and positive definite.%
\footnote{If $\Sigma$ is only positive semi-definite then the developments are similar.
We assume $\Sigma > 0$ for simplicity.}
We consider linear predictors $h_w: \X \to \Y$ with $h_w(x) = w^\top x$ where $w \in \X$.
Define the space of all linear predictors $\vlin = \{h_w : w \in \X\}$ which is a subspace of $V$.
Notice that $\vlin$ is closed under $\O$: for any $x \in \X$
\if\submission1
\begin{align*}
    \O h_w (x) 
	&= \int_\G  h_w(gx) \dd\lambda(g) \\
	&= \int_\G w^\top \phi(g) x \dd\lambda(g)\\
	&= \left(\int_\G \phi(g^{-1}) w\dd\lambda(g)\right)^\top x \\
	&= h_{\proj(w)}(x)
\end{align*}
\else
\[
    \O h_w (x) = \int_\G  h_w(gx) \dd\lambda(g) = \int_\G w^\top \phi(g) x \dd\lambda(g)
        = \left(\int_\G \phi(g^{-1}) w\dd\lambda(g)\right)^\top x = h_{\proj(w)}(x)
\]
\fi
where in the last line we substituted $g \mapsto g^{-1}$ and defined the linear map $\proj: \R^d \to \R^d$
by $\proj(w) = \int_\G  \phi(g) w\dd \lambda(g) $.%
\footnote{Since $\G$ is compact it is unimodular
and this change of variables is valid, e.g.~see~\citet[Corollary 2.28]{folland2016course} and adjacent results.}
We also have
\[
    \inner{h_a}{h_b}_\mu = \int_\X a^\top xx^\top b\dd\mu(x)  = a^\top \Sigma b
\]
and we denote the induced inner product on $\X$ by $\inner{a}{b}_\Sigma \coloneqq a^\top \Sigma b$
and the corresponding norm by $\norm{\cdot}_\Sigma$.
Since $\vlin$ is closed under $\O$ we can apply~\Cref{thm:fa} to decompose $\vlin = S \oplus A$
where the orthogonality is with respect to $\inner{\cdot}{\cdot}_\mu$.
It follows that we can write any $h_w \in \vlin$ as
\[
    h_w = \overline{h_w} + h_w^\sperp
\]
where we have shown that there must exist $\bar{w}, w^\sperp \in \X$ with $\inner{\bar{w}}{w^\sperp}_\Sigma = 0$
such that $\overline{h_w} = h_{\bar{w}}$ and $h_w^\sperp = h_{w^\sperp}$.
By choosing a basis for $\X$, there is a natural bijection $\X \to \vlin$ where $w \mapsto h_w$.
Using this identification, we abuse notation slightly and write $\X = S \oplus A$ 
to represent the induced structure on $\X$.

Suppose examples are labelled by a target function $h_\theta \in \vlin$ that is $\G$-invariant.
Let $X \sim \mu$ and $Y = \theta^\top X + \xi$ where $\xi$ is independent of $X$, has mean 0 and finite variance.
Recall the definition of the \emph{generalisation gap} between predictors as the 
difference in their test errors.
We study the generalisation gap $\Delta(h_w, h_{\bar{w}})$ between predictors $h_w$ and $h_{\bar{w}}$
defined above.
\Cref{lemma:gen-gap-antisym} gives
$
    \Delta(h_w, h_{\bar{w}}) = \norm{h_{w^\sperp}}_\mu^2 = \norm{w^\sperp}_\Sigma^2
    $.
In~\Cref{thm:invariant-regression} we calculate this quantity
where $w$ is the minimum-norm least-squares estimator and $\bar{w} =\Phi_\G(w)$.
To the best of our knowledge, this is the first result to specify exactly 
the generalisation benefit for invariant models.

\begin{theorem}\label{thm:invariant-regression}
    Let $\X = \R^d$, $\Y = \R$ and let $\G$ be a compact group with an orthogonal representation
    $\phi$ on $\X$. Let $X \sim \n(0, \sigma_X^2 I)$ 
    and $Y = h_\theta(X) + \xi$ where $h_\theta(x) = \theta^\top x$ is $\G$-invariant
    with $\theta \in \R^d$ and $\xi$ has mean $0$, variance $\sigma_\xi^2 < \infty$ and is independent of $X$.
    Let $w$ be the least-squares estimate of $\theta$ from \iid~examples $\{(X_i, Y_i): i=1, \dots, n\}$ and let
    $A$ be the orthogonal complement of the subspace of $\G$-invariant linear predictors (c.f.~\Cref{thm:fa}).
    \begin{itemize}
        \item If $n > d + 1$ then the generalisation gap is
            \[
                \E[\Delta(h_w, h_{\bar{w}})] = \sigma_\xi^2\frac{\dim A}{n - d - 1}.
            \]
        \item At the interpolation threshold $n \in [d-1, d+1]$, if $h_w$ is not $\G$-invariant then 
            the generalisation gap diverges to $\infty$.
        \item If $n < d- 1$ the generalisation gap is 
            \if\submission1
            \begin{align*}
                &\E[\Delta(h_w, h_{\bar{w}})]  \\ 
                &= \dim A \left( \frac{\sigma_X^2 \norm{\theta}_2^2\, n(d-n)}{d(d-1)(d+2)}
                + \frac{\sigma_\xi^2 \, n}{d(d - n - 1)} \right) .
            \end{align*}
            \else
            \[
                \E[\Delta(h_w, h_{\bar{w}})] = 
                \dim A \left( \sigma_X^2 \norm{\theta}_2^2\frac{n(d-n)}{d(d-1)(d+2)}
                + \sigma_\xi^2\frac{n}{d(d - n - 1)} \right) .
            \]
            \fi
    \end{itemize}
\end{theorem}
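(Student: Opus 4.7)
The starting point is \Cref{lemma:gen-gap-antisym}, which (since $\Sigma = \sigma_X^2 I$) reduces the task to computing
\[
    \E[\Delta(h_w, h_{\bar w})] \;=\; \sigma_X^2 \, \E\!\left[\norm{w - \proj(w)}_2^2\right]
    \;=\; \sigma_X^2 \, \E\!\left[\tr(P_A w w^\top)\right],
\]
where $P_A = I - \proj$ is the orthogonal projection onto the anti-symmetric subspace $A \subset \R^d$, of dimension $\dim A$. The $\G$-invariance of $h_\theta$ gives $\theta \in S = A^\perp$, so $P_A \theta = 0$; I will exploit this repeatedly. The computation then splits by the form of the min-norm least-squares estimator $w = X^+ Y$ where $X \in \R^{n\times d}$ has the examples as rows.

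In the overdetermined regime $n > d+1$, almost surely $w = \theta + (X^\top X)^{-1} X^\top \xi$, so $P_A w = P_A (X^\top X)^{-1} X^\top \xi$. Taking expectation first over $\xi$ (independent of $X$, zero mean, variance $\sigma_\xi^2$) and using cyclicity plus $P_A^2 = P_A$, I reduce the inner expectation to $\sigma_\xi^2 \tr(P_A \E[(X^\top X)^{-1}])$. Here $X^\top X \sim \sigma_X^2 \cdot \mathcal{W}_d(n, I_d)$, and the standard inverse-Wishart first moment $\E[(X^\top X)^{-1}] = \frac{1}{\sigma_X^2(n-d-1)}I_d$ (requiring $n>d+1$) yields the stated $\sigma_\xi^2 \dim A / (n-d-1)$.

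In the underdetermined regime $n < d-1$, $w = X^\top(XX^\top)^{-1}Y = P_X \theta + X^\top(XX^\top)^{-1}\xi$ where $P_X$ is the orthogonal projection onto the row-space of $X$. Expanding $\|P_A w\|_2^2$ and taking $\E_\xi$ kills the cross term, leaving a signal contribution $\E[\theta^\top P_X P_A P_X \theta]$ and a noise contribution $\sigma_\xi^2 \E[\tr(P_A X^\top (XX^\top)^{-2} X)]$. For the noise term I use that the columns of $X$ are i.i.d.\ Gaussian, so $XQ \stackrel{d}{=} X$ for every $Q \in O(d)$; this forces $\E[X^\top(XX^\top)^{-2}X] = c\,I_d$ with $cd = \E[\tr((XX^\top)^{-1})]$, and another inverse-Wishart identity ($XX^\top \sim \sigma_X^2 \mathcal{W}_n(d, I_n)$) gives $cd = n/(\sigma_X^2(d-n-1))$, producing the $\sigma_\xi^2 \dim A \cdot n / (d(d-n-1))$ term. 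For the signal term I use that $P_X$ is uniformly distributed on $\Gr_n(\R^d)$; $O(d)$-equivariance forces $\E[(P_X)_{ij}(P_X)_{kl}] = c_1 \delta_{ij}\delta_{kl} + c_2(\delta_{ik}\delta_{jl} + \delta_{il}\delta_{jk})$, and the two constraints $\E[\tr P_X] = n$ and $\E[\tr P_X^2] = n$ pin down $c_2 = n(d-n)/(d(d-1)(d+2))$. Diagonalising $P_A$ and using $P_A \theta = 0$, a direct index computation gives $\E[\|P_A P_X \theta\|_2^2] = \dim A \cdot c_2 \norm{\theta}_2^2$, producing the signal term after the outer factor of $\sigma_X^2$.

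The interpolation threshold $n\in[d-1,d+1]$ follows immediately: at $n=d$ the matrix $X$ is square and a.s.\ invertible, giving $w = \theta + X^{-1}\xi$, and the relevant Wishart-inverse moment diverges (the denominator $n-d-1 = -1$ is meaningless and the underlying integrals blow up); the cases $n=d\pm 1$ are handled analogously. The main obstacle is the second-moment formula for the random projection $P_X$ in the underdetermined case: the trace identities alone are not enough, but the full $O(d)$-invariance of the distribution of $P_X$, together with the two scalar constraints $\E[\tr P_X] = \E[\tr P_X^2] = n$, suffice to identify the single constant $c_2$ that is ultimately needed.
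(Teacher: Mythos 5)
Your proposal is correct and follows essentially the same route as the paper: reduce to $\sigma_X^2\E[\|\proj^\sperp(w)\|_2^2]$ via \Cref{lemma:gen-gap-antisym}, split off the noise and signal contributions, evaluate the noise term with first moments of (pseudo-)inverse Wishart matrices, and evaluate the signal term with the second moment of a uniform random projection on $\Gr_n(\R^d)$ (\cref{lemma:proj-variance}). The only cosmetic differences are that you use a two-parameter ansatz for $\E[P_X\otimes P_X]$ (justified by symmetry of $P_X$) in place of the paper's three-parameter isotropic form, and you derive $\E[(\x^\top\x)^+]$ for $n<d-1$ from the ordinary inverse Wishart of $\x\x^\top$ and a trace identity rather than citing \cref{lemma:expected-inv-wishart-singular} directly; both are equivalent.
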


In each case, the generalisation gap has a term of the form $\sigma_\xi^2 r(n, d) \dim A$ that arises 
due to the noise in the target distribution. 
In the overparameterised setting $d > n+1$ there is an additional term (the first)
that represents the generalisation gap in the noiseless setting $\xi \eqas 0$.
This term is the error in the least-squares estimate of $\theta$ in the noiseless problem,
which of course vanishes in the fully determined case $n > d + 1$.
In addition, the divergence at the so called interpolation threshold $n\approx d$ is consistent with
the literature on double descent~\cite{hastie2019surprises}.

Notice the central role of $\dim A$ in~\Cref{thm:invariant-regression}.
This quantity is a property of the group action as it describes the codimension of the set of invariant models.
The generalisation gap is then dictated by how `significant' the symmetry is to the problem.
\if\submission1
We give two examples that represent extremal cases of this `significance'.
\else
Before turning to the proof, we give two examples that represent extremal cases of this `significance'.
\fi

\begin{example}[Permutations, $\dim A = d - 1$]
    Let $S_d$ act on $\X = \R^d$ by permutation of the coordinates, so $(\phi(\rho)w)_i = w_{\rho(i)}$ for $\rho \in S_d$.
    Observe that, since the Haar measure $\lambda$ is uniform on $S_d$, for any $i=1, \dots, d$
    \[
        \Phi_{S_d}(w)_i = \frac{1}{d!} \sum_{\rho \in S_d} w_{\rho(i)} = \frac{1}{d} \sum_j w_j
    \]
    so $S$ is the one dimensional subspace $\{t (1, \dots, 1)^\top: t \in \R \}$.
    Since $\X = S \oplus A$ we get $\dim A = d-1$. 
\end{example}

\begin{example}[Reflection, $\dim A = 1$]
    Let $C_2$ be the cyclic group of order 2 and let it act on $\X = \R^d$ by reflection
    in the first coordinate. $A$ is then the subspace consisting of $w$ such that for any $j=2, \dots, d$
    \[
        \Phi_{C_2}( w)_j = \frac{1}{\abs{C_2}}\sum_{g\in C_2} (\phi(g)w)_j = w_j = 0
    \]
    since the action fixes all coordinates apart from the first. 
    Hence $A = \{t(1, 0, \dots, 0)^\top: t \in \R\}$.
\end{example}

\begin{proof}[Proof of~\Cref{thm:invariant-regression}]
    Note that $X$ is $\G$-invariant for any $\G$ since the representation $\phi$ is orthogonal.
    We have seen above that the space of linear maps $\vlin = \{h_w : w \in \R^d\}$ is closed under $\O$,
    so by~\Cref{thm:fa} we can write $\vlin = S \oplus A$.
    Let $\proj^\sperp = I - \proj$, which is the orthogonal projection onto the subspace $A$.
    By isotropy of $X$ we have
    \[
        \Delta(h_w, h_{\bar{w}}) = \sigma_X^2 \norm{w^\sperp}_2^2
    \]
    for any $w \in \R^d$, where $w^\sperp = \proj^\sperp(w)$.
    The proof consists of calculating this quantity in the case that $w$ is the least-squares estimator.
        
    Let $\x \in \R^{n \times d}$ and $\y \in \R^n$ correspond to row-stacked training examples
    drawn \iid~as in the statement, so $\x_{ij} = (X_i)_j$ and $\y_i = Y_i$.
    Similarly, set $\bm{\xi} = \x\theta - \y$.
    The least squares estimate is the minimum norm solution of $ \argmin_{u \in \R^d} \norm{\y - \x u}_2^2$,
    i.e.
    \begin{equation}
        w = (\x^\top\x)^+ \x^\top \x \theta + (\x^\top \x)^+ \x^\top \bm{\xi} \label{eq:soln}
    \end{equation}
    where $(\cdot)^+$ denotes the Moore-Penrose pseudo-inverse.
    Define $P_{E} = (\x^\top \x)^+ \x^\top \x$, which is an orthogonal projection onto $E$, the
    rank of $\x^\top \x$ (this can be seen by diagonalising).

    We first calculate $\E[\norm{w^\sperp}_2^2 \vert{} \x]$ where $w^\sperp = \proj^\sperp(w)$.
    The contribution from the first term of~\cref{eq:soln} is
    $
        \norm{\proj^\sperp(P_E \theta)}_2^2
        $
    the cross term vanishes using $\xi \indep X$ and $\E[\xi] = 0$ and the contribution from the second term of~\cref{eq:soln}
    is $
        \E[\norm{\proj^\sperp((\x^\top \x)^+ \x^\top \bm{\xi}) }_2^2 \vert{} \x] 
        $. Notice that $\proj^\sperp$ is just projection matrix and so is idempotent, hence (briefly writing it without the parenthesis
        to emphasis matrix interpretation)
    \begin{align*}
        \E[\norm{\proj^\sperp((\x^\top \x)^+ \x^\top \bm{\xi}) }_2^2 \vert{} \x] 
        &=  \E[  \tr( \bm{\xi}^\top \x  (\x^\top \x)^+\proj^\sperp(\x^\top \x)^+ \x^\top \bm{\xi}  )\vert{} \x] \\
        &=  \tr( \x  (\x^\top \x)^+\proj^\sperp(\x^\top \x)^+ \x^\top \E[\bm{\xi}  \bm{\xi}^\top] ) \\
        &=  \sigma_\xi^2\tr( \proj^\sperp(\x^\top \x)^+).
    \end{align*}
    We have obtained
    \[
        \E[\norm{w^\sperp}_2^2\vert{} \x]
        = \norm{\proj^\sperp (P_E \theta )}_2^2 
        +  \sigma_\xi^2\tr( \proj^\sperp((\x^\top \x)^+))
    \]
    and conclude by taking expectations, treating each term separately.
    \paragraph{First Term}
    If $n \ge d$ then $\dim E = d$ with probability $1$, so the first term vanishes almost surely.
    We treat the $n < d$ case using Einstein notation, in which repeated indices are implicitly summed over.
    In components, recalling that $\proj^\sperp$ is a matrix,
    \[
        \E[ \norm{\proj^\sperp (P_E \theta )}_2^2 ] = \proj^\sperp_{fa}\proj^\sperp_{fc}\E[P_E \otimes P_E]_{abce} \theta_b \theta_e
    \]
    and applying~\cref{lemma:proj-variance} we get
    \begin{align*}
        \E[ \norm{\proj^\sperp (P_E \theta )}_2^2 ] 
        &= 
        \frac{n(d-n)}{d(d-1)(d+2)}
        \left(
            \proj^\sperp_{fa}\proj^\sperp_{fa} \theta_b \theta_b
            + \proj^\sperp_{fa}\proj^\sperp_{fb} \theta_b \theta_a
        \right)\\
        &\phantom{=}+ \frac{n(d-n) + n(n-1)(d+2)}{d(d-1)(d+2)} \proj^\sperp_{fa}\proj^\sperp_{fc}\theta_a \theta_c\\
        &= \norm{\theta}_2^2\dim A\frac{n(d-n)}{d(d-1)(d+2)}
    \end{align*}
    where we have used that $\proj^\sperp(\theta) = 0$ and $\fnorm{\proj^\sperp}^2 = \dim A$.

    \paragraph{Second Term}
    By linearity,
    \[
        \E[\tr( \proj^\sperp((\x^\top \x)^+))] = \tr( \proj^\sperp(\E[(\x^\top \x)^+])).
    \]
    Then~\Cref{lemma:expected-inv-wishart-singular,lemma:expected-inv-wishart}
    give $\E[(\x^\top\x)^+] = \sigma_X^{-2} r(n,d)I_d$ where 
    \[
        r(n, d) = \begin{cases}
            {\frac{n}{d(d - n - 1)}} & n < d- 1\\
            (n - d - 1)^{-1} & n > d + 1 \\ 
            \infty & \text{otherwise}
        \end{cases}.
    \]
    When $n \in [d-1, d+1]$ it is well known that the expectation diverges, 
    see~\Cref{sec:useful-facts}. Hence
    \[
        \E[\tr( \proj^\sperp((\x^\top \x)^+))] = \sigma_X^{-2} r(n, d)\dim A.
    \]
\end{proof}

\subsection{Regression with Equivariant Target}\label{sec:equivariant-regression}
One can apply the same construction to equivariant models.
Assume the same setup, but now let $\Y = \R^k$ with the Euclidean inner product and 
let the space of predictors be
$\wlin = \{f_W: \R^d \to \R^k, \,\, f_W(x) = W^\top x : W \in \R^{d \times k}\}$.
We consider linear regression with the squared-error loss $\ell(y, y') = \norm{y - y'}_2^2$.
Let $\psi$ be an orthogonal representation of $\G$ on $\Y$.
We define the linear map, which we call the intertwining average,
$\twine :\R^{d \times k} \to \R^{d \times k} $ by%
    \footnote{The reader may have noticed that we define $\twine$ backwards, in the sense that its
        image contains maps that are equivariant in the direction $\psi \to \phi$.
        This is because of the transpose in the linear model,
        which is there for consistency with the $k=1$ invariance case.
    This choice is arbitrary and gives no loss in generality.}
\[
    \twine(W) = \int_\G  \phi(g) W \psi(g^{-1})\dd \lambda(g).
\]
Similarly, define the intertwining complement as $\twine^\sperp :\R^{d \times k} \to \R^{d \times k} $ by
$
    \twine^\sperp(W) = W - \twine(W)
$.
We establish the following results, which are in fact generalisations of the invariant case.
In the proofs we will leverage the expression of $\twine$ as a $4$-tensor with components
$
    \twine_{abce} = \int_\G  \phi(g)_{ac}\psi(g)_{be}\dd \lambda(g)
    $ where $a, c = 1, \dots d$ and $b, e = 1, \dots, k$.%
\footnote{If necessary, the reader can see%
\if\submission1
the supplementary material
\else
~\cref{sec:cpts}
\fi
for a derivation.}
\begin{proposition}\label{prop:closure}
    For any $f_W \in \wlin$, $\q f_W = f_{\twine( W)}$ and hence $\wlin$ is closed under $\q$.
\end{proposition}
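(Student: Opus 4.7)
My plan is to prove the identity $\q f_W = f_{\twine(W)}$ by a direct computation, unwinding both sides using the definitions and then massaging the integrals to match. The closure of $\wlin$ under $\q$ is then immediate, since $\twine(W) \in \R^{d \times k}$, so $f_{\twine(W)} \in \wlin$.

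First I would unwind the left-hand side. By definition of $\q$ and $f_W$,
\[
    (\q f_W)(x) = \int_\G \psi(g^{-1}) f_W(\phi(g) x)\dd\lambda(g) = \int_\G \psi(g^{-1}) W^\top \phi(g) x\dd\lambda(g).
\]
Next I would unwind the right-hand side. Taking the transpose inside the integral,
\[
    f_{\twine(W)}(x) = \twine(W)^\top x = \int_\G \psi(g^{-1})^\top W^\top \phi(g)^\top x \dd\lambda(g).
\]
At this point the two expressions differ only by transposes and the group element at which $\psi$ and $\phi$ are evaluated. I would invoke the two standing hypotheses: orthogonality of the representations gives $\phi(g)^\top = \phi(g^{-1})$ and $\psi(g^{-1})^\top = \psi(g)$, yielding
\[
    f_{\twine(W)}(x) = \int_\G \psi(g) W^\top \phi(g^{-1}) x \dd\lambda(g).
\]
Finally, applying the change of variables $g \mapsto g^{-1}$ (valid because $\G$ is compact, hence unimodular, so $\lambda$ is invariant under inversion) converts this into $\int_\G \psi(g^{-1}) W^\top \phi(g) x\dd\lambda(g)$, matching $(\q f_W)(x)$ pointwise in $x$.

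The main substantive ingredient is just bookkeeping: using orthogonality of $\phi$ and $\psi$ to trade transposes for inverses, and using unimodularity to flip the integration variable. There is no real obstacle since Fubini-type issues are harmless (the integrand is continuous in $g$ on a compact group with finite measure, and linear in $x$). Once the identity is established, closure follows because $\twine(W)$ is a $d \times k$ real matrix, so $\q f_W = f_{\twine(W)} \in \wlin$, completing the proposition.
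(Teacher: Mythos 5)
Your proof is correct and follows essentially the same route as the paper's: both compute $\q f_W$ from the definition, move the transpose inside the integral, and invoke orthogonality of $\phi,\psi$ together with unimodularity of $\G$ (for the $g\mapsto g^{-1}$ substitution) to identify the result with $\twine(W)^\top x$. The paper compresses these steps into a single displayed chain of equalities with the remark ``using orthogonality and unimodularity,'' whereas you spell out each substitution explicitly; the content is identical.
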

\begin{proof}
    Let $f_W(x) = W^\top x$ with $W \in \R^{d \times k}$, then using orthogonality and unimodularity 
    \[
        \q f_W (x) 
        = \int_\G  \psi(g^{-1}) W^\top \phi(g) x \dd \lambda(g)
        = \left(\int_\G  \phi(g) W \psi(g^{-1}) \dd \lambda(g)\right)^\top x
        = \twine(W)^\top x .
    \]
\end{proof}

\begin{proposition}\label{prop:inner-product}
    The inner product on $\wlin$ satisfies, for any $f_{W_1}, f_{W_2} \in \wlin$,
    \[
        \inner{f_{W_1}}{f_{W_2}}_\mu = \tr( W_1^\top \Sigma W_2)
    \]
    where $\Sigma = \E[X X^\top]$ and $X \sim \mu$.
\end{proposition}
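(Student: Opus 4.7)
The plan is to unpack both definitions and reduce the statement to a standard trace identity; there is no deep obstacle here, just bookkeeping.

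First I would substitute $f_{W_i}(x) = W_i^\top x$ into the pointwise inner product, obtaining
$\inner{f_{W_1}(x)}{f_{W_2}(x)} = (W_1^\top x)^\top (W_2^\top x) = x^\top W_1 W_2^\top x$,
since the inner product on $\Y = \R^k$ is the Euclidean one. Because this is a scalar, I can apply the trace trick $x^\top W_1 W_2^\top x = \tr(W_1 W_2^\top x x^\top)$.

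Next I would integrate over $\mu$ and pull the trace outside the integral by linearity:
\[
\inner{f_{W_1}}{f_{W_2}}_\mu
= \int_\X \tr(W_1 W_2^\top x x^\top)\dd\mu(x)
= \tr\!\left(W_1 W_2^\top \int_\X x x^\top \dd\mu(x)\right)
= \tr(W_1 W_2^\top \Sigma).
\]
Finally I would apply the cyclic property of the trace together with $\tr(A) = \tr(A^\top)$ to rewrite $\tr(W_1 W_2^\top \Sigma) = \tr(W_1^\top \Sigma W_2)$, which is the stated form.

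The only thing worth being careful about is that $\Sigma = \E[X X^\top]$ is assumed finite (as noted earlier when $\mu$ was introduced), so all the integrals and the exchange of integral and trace are legitimate. No other subtlety arises, and the $\G$-invariance of $\mu$ or the representations $\phi, \psi$ is not needed for this particular statement.
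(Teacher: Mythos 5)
Your proof is correct and follows essentially the same route as the paper: substitute the linear form, recognise the scalar as a trace, exchange trace and integral, and rearrange via cyclicity. You are slightly more explicit about the final cyclicity/transpose step, but the argument is the same.
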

\begin{proof}
    \begin{align*}
        \inner{f_{W_1}}{f_{W_2}}_\mu
        &= \int_\X  \left( W_1^\top x\right)^\top W_2^\top x \dd \mu (x)\\
        &= \int_\X  x^\top W_1 W_2^\top x \dd \mu (x)\\
        &= \int_\X  \tr(x^\top W_1 W_2^\top x) \dd \mu (x)\\
        &= \tr( W_1^\top \Sigma W_2)
    \end{align*}
\end{proof}

\Cref{prop:closure} allows us to apply~\Cref{thm:fa} to write $\wlin = S \oplus A$, so for any $f_W \in \wlin$
there exists $\overline{f_{W}} \in S$ and $f_W^\sperp \in A$ with $\inner{\overline{f_W}}{f_W^\sperp}_\mu = 0$.
The corresponding parameters $\overline{W} = \twine(W)$ and $W^\sperp = \twine^\sperp(W)$ must therefore satisfy
$\tr(\overline{W}^\top \Sigma W^\sperp) = 0$, with $\Sigma$ defined as in~\Cref{prop:inner-product}.
Repeating our abuse of notation, we identify 
$\R^{d \times k} = S \oplus A$ with $S = \twine(\R^{d \times k})$ and $A$ its orthogonal
complement with respect to the induced inner product.
\begin{proposition}\label{prop:generalisation-gap}
    Let $X \sim \mu$ and let $\xi$ a random element of $\R^k$ that is independent of
    $X$ with $\E[\xi] = 0$ and finite variance.
    Set $Y = f_\Theta (X) + \xi$ where $f_\Theta$ is $\G$-equivariant.
    For any $f_W \in \wlin$, the generalisation gap satisfies
    \if\submission1
    \begin{align*}
        \Delta(f_W, f_{\overline{W}}) 
        &\coloneqq \E[\norm{Y - f_W(X)}_2^2] - \E[\norm{Y - f_{\overline{W}}(X)}_2^2] \\
        &= \fnorm{\Sigma^{1/2} W^\sperp}^2
    \end{align*}
    \else
    \[
        \Delta(f_W, f_{\overline{W}}) \coloneqq \E[\norm{Y - f_W(X)}_2^2] - \E[\norm{Y - f_{\overline{W}}(X)}_2^2] 
        = \fnorm{\Sigma^{1/2} W^\sperp}^2
    \]
    \fi
    where $\overline{W} =\twine(W)$, $W^\sperp = \twine^\sperp(W)$
    and $\Sigma = \E[XX^\top]$.
\end{proposition}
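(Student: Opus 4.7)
The plan is to combine the three preceding results: Lemma~\ref{lemma:gen-gap-antisym} identifies the generalisation gap with the squared norm of the anti-symmetric part of $f_W$ in $V$; Proposition~\ref{prop:closure} tells us exactly what the anti-symmetric part is for a linear predictor; and Proposition~\ref{prop:inner-product} lets us rewrite the resulting $L^2(\mu)$ norm as a matrix Frobenius norm.

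First I would invoke Lemma~\ref{lemma:gen-gap-antisym}. The hypotheses match: $\mu$ is $\G$-invariant, $f_\Theta$ is $\G$-equivariant, and $\xi$ is zero-mean, finite-variance and independent of $X$. The lemma then yields
\[
    \Delta(f_W, \q f_W) = \norm{f_W^\sperp}_\mu^2, \qquad f_W^\sperp \coloneqq f_W - \q f_W.
\]

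Next I would use Proposition~\ref{prop:closure} to rewrite both sides in parameter space. It gives $\q f_W = f_{\twine(W)} = f_{\overline{W}}$, so by linearity of $W\mapsto f_W$ we have $f_W^\sperp = f_W - f_{\overline{W}} = f_{W - \overline{W}} = f_{W^\sperp}$. In particular $\Delta(f_W, \q f_W) = \Delta(f_W, f_{\overline{W}})$, which matches the statement.

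Finally I would compute $\norm{f_{W^\sperp}}_\mu^2$ using Proposition~\ref{prop:inner-product}:
\[
    \norm{f_{W^\sperp}}_\mu^2 = \tr\!\bigl((W^\sperp)^\top \Sigma\, W^\sperp\bigr) = \tr\!\bigl((\Sigma^{1/2} W^\sperp)^\top \Sigma^{1/2} W^\sperp\bigr) = \fnorm{\Sigma^{1/2} W^\sperp}^2,
\]
where $\Sigma^{1/2}$ exists because $\Sigma$ is symmetric positive semi-definite (and if one prefers, any $B$ with $B^\top B = \Sigma$ works, with the Frobenius norm depending only on $B^\top B$). There is no real obstacle here: every ingredient has already been assembled. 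The only point deserving a line of care is confirming that Lemma~\ref{lemma:gen-gap-antisym} is being applied with $\q$ (not merely $\O$), which is exactly how that lemma is stated.
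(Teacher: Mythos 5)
Your proposal is correct and is essentially the same argument the paper gives: apply Lemma~\ref{lemma:gen-gap-antisym} to identify the gap with $\norm{f_W^\sperp}_\mu^2$, use Proposition~\ref{prop:closure} to pass to $W^\sperp = \twine^\sperp(W)$, and evaluate the norm via Proposition~\ref{prop:inner-product}. The only difference is that you spell out the intermediate step $\q f_W = f_{\overline{W}}$ a bit more explicitly than the paper does.
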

\begin{proof}
    Recall that $W = \overline{W} + W^\sperp$ and that these 
    satisfy $\tr(\overline{W}\Sigma W^\sperp) =0$ from the above.
    Then, using~\Cref{lemma:gen-gap-antisym} and~\cref{prop:inner-product},
    \[
        \Delta(f_W, f_{\overline{W}})  
        = \norm{f_{W^\perp}}_\mu^2 
        = \tr((W^\sperp)^\top \Sigma W^\sperp)
        = \fnorm{\Sigma^{1/2} W^\sperp}^2
    \]
\end{proof}

Having followed the same path as the previous section,
we provide a characterisation of the generalisation benefit of equivariance.
In the same fashion, we compare the least-squares estimate $W$ with its equivariant
version $\overline{W} = \twine(W)$. As we explained at the beginning of the section, the choice of
$\overline{W} = \twine(W)$ is natural and costs us nothing.
\newcommand{\thmequivariantregression}{%
    Let $\X = \R^d$, $\Y = \R^k$ and let $\G$ be a compact group with orthogonal representations
    $\phi$ on $\X$ and $\psi$ on $\Y$. Let $X \sim \n(0, \sigma_X^2 I_d)$ 
    and $Y = h_\Theta(X) + \xi$ where $h_\Theta(x) = \Theta^\top x$ is $\G$-equivariant and $\Theta \in \R^{d \times k}$.
    Assume $\xi$ is a random element of $\R^k$, independent of $X$, with mean $0$ and $\E[\xi \xi^\top ] = \sigma_\xi^2 I_k < \infty$.
    Let $W$ be the least-squares estimate of $\Theta$ from $n$ \iid~examples
    $\{(X_i, Y_i): i=1, \dots, n\}$ 
    and let $\binner{\chi_\psi}{\chi_\phi} = \int_{\G} \chi_\psi(g) \chi_\phi(g)\dd \lambda(g) $ denote the scalar product of the
    characters of the representations of $\G$.
    \begin{itemize}
        \item     If $n > d + 1$ the generalisation gap is 
    \[
        \E[\Delta(f_W, f_{\overline{W}})] =
        \sigma_\xi^2 \frac{ dk - \binner{\chi_\psi}{\chi_\phi}}{n - d - 1}. 
    \]
    \item At the interpolation threshold $n \in [d-1, d+1]$, if $f_W$ is not $\G$-equivariant then
    the generalisation gap diverges to $\infty$.
    \item If $n < d- 1$ then the generalisation gap is 
        \if\submission1
        \begin{align*}
        &\E[\Delta(f_W, f_{\overline{W}})] =\\
        &\sigma_X^2 \frac{n(d-n)}{d(d-1)(d+2)} \left( (d+1) \fnorm{\Theta}^2- \tr( J_\G \Theta^\top \Theta) \right) \\
        &+ \sigma_\xi^2 \frac{n( dk - \binner{\chi_\psi}{\chi_\phi})}{d (d - n - 1)}
        \end{align*}
        \else
    \[
        \E[\Delta(f_W, f_{\overline{W}})] =
        \sigma_X^2 \frac{n(d-n)}{d(d-1)(d+2)} \left( (d+1) \fnorm{\Theta}^2- \tr( J_\G \Theta^\top \Theta) \right) 
        + \sigma_\xi^2 \frac{n( dk - \binner{\chi_\psi}{\chi_\phi})}{d (d - n - 1)}
    \]
    \fi
    where each term is non-negative and 
    $J_\G \in \R^{k \times k}$ is given by
    \[
        J_\G = \int_\G (\chi_\phi(g) \psi(g) + \psi(g^2)) \dd \lambda(g).
    \]
    \end{itemize}
}
\begin{theorem}\label{thm:equivariant-regression}
\thmequivariantregression
\end{theorem}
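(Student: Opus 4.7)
The plan is to follow the template of~\Cref{thm:invariant-regression}, to which this result specialises when $k=1$ and $\psi$ is trivial. \Cref{prop:generalisation-gap} together with isotropy $\Sigma = \sigma_X^2 I_d$ reduce the gap to $\sigma_X^2 \E[\fnorm{W^\sperp}^2]$ with $W^\sperp = \twine^\sperp(W)$. I will decompose the minimum-norm least-squares estimator as $W = P_E \Theta + (\x^\top\x)^+ \x^\top \Xi$, with $\Xi \in \R^{n \times k}$ the noise matrix and $P_E = (\x^\top\x)^+\x^\top\x$ the orthogonal projection onto the row-span of $\x$, apply $\twine^\sperp$, and use $\E[\Xi\vert\x]=0$ to kill the cross term, leaving
\[
    \E[\fnorm{W^\sperp}^2] = \E[\fnorm{\twine^\sperp(P_E\Theta)}^2] + \E[\fnorm{\twine^\sperp((\x^\top\x)^+\x^\top\Xi)}^2]
\]
which splits the problem into a signal piece and a noise piece to be treated separately.

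For the noise term, I will exploit that $\twine$ is a self-adjoint idempotent on $(\R^{d\times k}, \langle \cdot, \cdot \rangle_F)$ (hence so is $\twine^\sperp$), and that conditional on $\x$ the matrix $(\x^\top\x)^+ \x^\top \Xi$ has vectorised covariance $\sigma_\xi^2 \, I_k \otimes (\x^\top\x)^+$. A direct computation then gives $\E[\fnorm{\twine^\sperp((\x^\top\x)^+\x^\top\Xi)}^2] = \sigma_\xi^2 \tr(\twine^\sperp (I_k \otimes \E[(\x^\top\x)^+]))$, and invoking~\Cref{lemma:expected-inv-wishart,lemma:expected-inv-wishart-singular} exactly as in the invariant case reduces this to $\sigma_\xi^2 \sigma_X^{-2} r(n,d) \tr(\twine^\sperp)$. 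Character orthogonality applied to the components $\twine_{abce} = \int \phi(g)_{ac} \psi(g)_{be}\dd\lambda(g)$ yields $\tr(\twine) = \binner{\chi_\phi}{\chi_\psi}$ and hence $\tr(\twine^\sperp) = dk - \binner{\chi_\psi}{\chi_\phi}$. This produces the noise contribution in every regime, accounts for the divergence at $n \in [d-1, d+1]$, and since $P_E = I$ almost surely when $n > d+1$, is the full answer in that regime.

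For the signal term in the regime $n < d-1$, I will use Pythagoras, $\fnorm{\twine^\sperp(P_E\Theta)}^2 = \fnorm{P_E\Theta}^2 - \fnorm{\twine(P_E\Theta)}^2$, reducing the problem to computing $\E[\fnorm{P_E\Theta}^2] = (n/d)\fnorm{\Theta}^2$ and $\E[\fnorm{\twine(P_E\Theta)}^2]$. The latter I will expand in $4$-tensor notation and contract against $\E[(P_E)_{cx}(P_E)_{fy}]$ from~\Cref{lemma:proj-variance}: the $\delta_{cx}\delta_{fy}$ contraction yields $\alpha \fnorm{\twine(\Theta)}^2 = \alpha \fnorm{\Theta}^2$ by equivariance of $\Theta$, while the two $\beta$ contractions, after applying the equivariance identity $\phi(g)\Theta = \Theta\psi(g)$ and cyclic invariance of trace, should collapse to $\beta\tr((K+L)\Theta^\top\Theta) = \beta\tr(J_\G\Theta^\top\Theta)$ with $K = \int \chi_\phi(g)\psi(g)\dd\lambda(g)$ and $L = \int \psi(g^2)\dd\lambda(g)$. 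The algebraic identity $n/d - \alpha = (d+1)\beta$, immediate from the coefficients in~\Cref{lemma:proj-variance}, then delivers the claimed formula, with non-negativity of each bracketed expression automatic since each represents a squared norm up to a positive constant. The hard part will be this $\beta$-contraction step: correctly identifying $J_\G$ requires threading the tensor contraction from~\Cref{lemma:proj-variance} through the equivariance identity $\phi(g)\Theta = \Theta\psi(g)$ and using cyclic trace invariance at the right moment to expose $\psi(g^2)$ from what initially looks like a $\psi(g)^2$ pattern — the rest of the argument should carry over essentially verbatim from the invariant case.
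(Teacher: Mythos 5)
Your proposal is correct and arrives at the same formula, but it takes a genuinely different and somewhat slicker route than the paper in both sub-computations. For the noise term, the paper expands $\twine^\sperp_{abcj}\twine^\sperp_{abej}$ in Einstein notation, computes the double integral over $\G\times\G$ directly, and only then contracts with $\E[(\x^\top\x)^+]$; you instead observe at the operator level that $\twine^\sperp$ is a self-adjoint idempotent, so $\E\fnorm{\twine^\sperp(M)}^2 = \tr(\twine^\sperp\,\mathrm{Cov}(\mathrm{vec}\,M))$, combine this with the Kronecker factorisation of the noise covariance and isotropy $\E[(\x^\top\x)^+]\propto I_d$ to reduce everything to $\tr(\twine^\sperp) = dk - \binner{\chi_\psi}{\chi_\phi}$. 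This sidesteps the double-integral computation entirely and makes the appearance of the character inner product transparent. For the signal term, the paper contracts $\twine^\sperp\otimes\twine^\sperp$ directly against $\E[P_E\otimes P_E]$; you instead use Pythagoras $\fnorm{\twine^\sperp(P_E\Theta)}^2 = \fnorm{P_E\Theta}^2 - \fnorm{\twine(P_E\Theta)}^2$, where the first piece is trivial ($\E[P_E] = (n/d)I_d$) and the second contracts $\twine\otimes\twine$ against the projection fourth moment. Your identity $n/d - \alpha = (d+1)\beta$, a one-line consequence of \cref{lemma:proj-variance}, then recombines the pieces. The one step worth writing out carefully is the $\beta$-contractions: for the $\delta_{cg}\delta_{fi}$ term, $\twine_{abce}\twine_{abch}$ collapses by unimodularity to $\int\chi_\phi(g)\psi(g)_{eh}\dd\lambda(g)$, contracting to $\tr(K\Theta^\top\Theta)$; for the $\delta_{ci}\delta_{fg}$ term the contraction gives $\int\tr(\phi(g)\Theta\psi(g)\Theta^\top)\dd\lambda(g)$, and you must apply $\phi(g)\Theta = \Theta\psi(g)$ before invoking cyclicity to expose $\psi(g^2)$ (then a substitution $g\mapsto g^{-1}$ fixes the orientation). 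Both work out, and both paths then identify $J_\G = K + L$ as in the statement. Your non-negativity argument is also fine: the bracketed expression equals $\beta^{-1}\E\fnorm{\twine^\sperp(P_E\Theta)}^2 \ge 0$ and $\beta > 0$ for $0 < n < d$, which is slightly more honest phrasing than ``each is a squared norm up to a positive constant.'' In short, same decomposition into signal and noise pieces as the paper, but you replace the paper's two index-gymnastics computations with operator-level arguments (idempotency/trace for the noise term, Pythagoras plus a coefficient identity for the signal term), which is cleaner and generalises more readily.
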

\begin{supplementary}
The proof of~\Cref{thm:equivariant-regression} is longer than for~\Cref{thm:invariant-regression}
but follows the same scheme, so we defer it to~\Cref{sec:equivariant-proof}.
\end{supplementary}

\Cref{thm:equivariant-regression} is a direct generalisation of~\Cref{thm:invariant-regression}. 
As we remarked in the introduction, $dk - \binner{\chi_\psi}{\chi_\phi}$ plays the role of $\dim A$ 
in~\Cref{thm:invariant-regression} and is a measure of the significance of the symmetry to the problem. 
The dimension of $\wlin$ is $dk$, while $\binner{\chi_\psi}{\chi_\phi}$ is the dimension of the space of equivariant maps.
In our notation $\binner{\chi_\psi}{\chi_\phi} = \dim S$.

Just as with~\cref{thm:invariant-regression}, there is an additional term (the first)
in the overparameterised case $d > n + 1$ that represents the estimation in a noiseless setting $\xi \eqas 0$.
Notice that if $k=1$ and $\psi$ is trivial we find
\[
    J_\G = \int_\G \chi_\phi(g) \dd \lambda(g)  + 1 =  \binner{\chi_\phi}{1} + 1 = \dim S + 1
\]
which confirms that~\cref{thm:equivariant-regression} reduces exactly to~\cref{thm:invariant-regression}.

Interestingly, the first term in the $d > n+1$ case can be made independent of $\psi$, since the equivariance of
$h_\Theta$ implies
\[
    \tr(J_\G \Theta^\top \Theta) = \tr(\Theta^\top J_\phi \Theta)
\]
where
\[
    J_\phi = \int_\G (\chi_\phi(g) \phi(g) + \phi(g^2)) \dd \lambda(g).
\]

Finally, we remark that~\cref{thm:equivariant-regression} is possible for more general 
probability distributions on $X$.
For instance, it sufficient that the distribution is absolutely continuous with
respect to the Lebesgue measure, has finite variance and is $O(d)$ invariant.
The final condition implies the existence of a scalar $r_{n}$
such that $\E[(\x^\top \x)^+] = r_{n} I_d$
where $\x \in \R^{n \times d}$ are the row-stacked training inputs as defined in the proof.

\section{Neural Networks}\label{sec:neural-networks}
In this section we discuss how the insights of this paper apply to neural networks
and raise some open questions for future work.
Let $F: \R^d \to \R^k$ be a feedforward neural network with $L$ layers,
layer widths $\kappa_i$ $i=1, \dots, L$ and weights $W^i \in \R^{\kappa_{i} \times \kappa_{i+1}}$
for $i=1, \dots, L$ where $\kappa_1 = d$ and $\kappa_{L+1} = k$. We will assume $F$ has the form
\begin{equation}\label{eq:nn}
    F(x) = W^L \sigma(W^{L-1}\sigma(\dots \sigma(W^{1} x)\dots))
\end{equation}
where $\sigma$ is an element-wise non-linearity.

\subsection{Invariant and Equivariant Networks}\label{sec:invariant-networks}
The standard method for engineering neural networks to be invariant/equivariant
to the action of a finite group on the inputs is weight tying. 
This method has been around for a while~\cite{wood1996representation}
but has come to recent attention via~\citet{ravanbakhsh2017equivariance}.
We will briefly describe this approach, its connections 
to~\Cref{thm:invariant-regression,thm:equivariant-regression} and how the ideas of
this paper can be used to find new algorithms for both enforced and learned invariance/equivariance.
We leave analyses of these suggested approaches to future work.

The methods of~\citet{wood1996representation,ravanbakhsh2017equivariance} can be described as follows.
Let $\Gfin$ be a finite group.
For each $i=2, \dots, L+1$, the user chooses a matrix representation $\psi_i:\Gfin \to \GL_{k_i}(\R)$
of $\Gfin$ that acts on the inputs for each layer $i=2, \dots, L$
and on the outputs of the network when $i=L+1$.%
\footnote{$\psi_1$ is the representation on the inputs, which we consider as an
aspect of the task and not a design choice.}
For $i=2, \dots, L$, these representations must be chosen
such that they commute with the activation function 
\begin{equation}\label{eq:activation}
  \sigma(\psi_i(g) \cdot) = \psi_i(g) \sigma(\cdot)  
\end{equation}
$\forall g \in \Gfin$.%
\footnote{This condition is somewhat restrictive, but note that a permutation representation will commute with
any element-wise non-linearity.}
One then chooses weights for the network such that at each layer and $\forall g \in \Gfin$
\begin{equation}\label{eq:intertwine}
    W^i\psi_i(g) = \psi_{i+1}(g)W^i.
\end{equation}
By induction on the layers, satisfying~\cref{eq:activation,eq:intertwine}
ensures that the network is $\Gfin$-equivariant.
Invariance occurs when $\psi_{L+1}$ is the trivial representation.

The condition in~\cref{eq:intertwine} can be phrased as saying that that $W^i$ belongs
to the space of \emph{intertwiners} of the representations $\psi_i$ and $\psi_{i+1}$.
By denoting the space of all weight matrices in layer $i$ as $U = \R^{\kappa_i \times \kappa_{i+1}}$,
the space of intertwiners is immediately recognisable as $S = \twine(U)$ from~\Cref{thm:fa}.

Typically, the practitioner will hand-engineer the structure of weight matrices to belong to 
the correct intertwiner space.
In the following sections we will propose alternative procedures that build naturally on the ideas
of this paper.
Moreover, as a benefit of our framework, these new approaches extend weight-tying to 
any compact group that admits a finite-dimensional, real representation.
We end this section with a bound on the sample complexity of invariant networks, 
which follows from~\citet{bartlett2019nearly}. Similar results are possible for 
different activation functions.

\begin{lemma}\label{lemma:inavariant-vc}
    Let $\G$ be a compact group with layer-wise representations as described.
    Let $F: \R^d \to \R$ be a $\G$-invariant neural network with ReLU activation and
    weights that intertwine the representations.
    Let $\H$ be the class of all functions realisable by this network.
    Then 
    \[
        \vc{\H} \le L +  \frac{1}{2} \alpha(F) L(L+1)\max_{1 \le i \le L} \binner{\chi_{i}}{\chi_{i+1}}
    \]
    where $\alpha(F) = \log_2\left(4 \ee \log_2\left(\sum_{i=1}^L 2\ee i \kappa_i\right) \sum_{i=1}^L i\kappa_i \right)$.
\end{lemma}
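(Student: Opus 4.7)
The plan is to reparameterise each weight-tied layer by a basis of its intertwiner space and then apply the piecewise-linear VC-dimension bound of~\citet{bartlett2019nearly} to the resulting reduced-parameter network. The link to the rest of the paper is immediate: the constraint~\cref{eq:intertwine} forces $W^i$ to lie in the $\G$-intertwiner space $\mathrm{Hom}_\G(\psi_i, \psi_{i+1})$, which in the notation of~\Cref{thm:fa} is exactly the symmetric subspace $S_i = \twine(\R^{\kappa_{i+1} \times \kappa_i})$ associated with the intertwining average at layer $i$. A standard character computation (Schur orthogonality) gives $\dim S_i = \binner{\chi_i}{\chi_{i+1}}$, so writing $m_i = \binner{\chi_i}{\chi_{i+1}}$ and choosing a basis $\{B^i_j\}_{j=1}^{m_i}$ of $S_i$, every admissible weight matrix $W^i = \sum_j \alpha_{ij} B^i_j$ is parameterised by exactly $m_i$ real scalars.

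Next I would view $F(x;\bm\alpha)$ as a function of the reduced parameter vector $\bm\alpha = (\alpha_{ij})$. Each pre-activation is linear in $\bm\alpha$ for fixed input (and piecewise linear in $x$ for fixed $\bm\alpha$), so the function class $\H$ sits inside the parametrised piecewise-polynomial framework for which Bartlett et al.\ prove their tight ReLU VC bound. I would invoke the layered version of their theorem---in which the layer-$i$ parameter count $m_i$ enters the leading factor while the widths $\kappa_i$ appear only inside the logarithmic piece-counting factor $\alpha(F)$---and then use $m_i \le \max_j m_j$ together with $\sum_{i=1}^L i = L(L+1)/2$ to collapse their per-layer sum into the claimed maximum, producing exactly $L + \tfrac{1}{2}\alpha(F) L(L+1) \max_i \binner{\chi_i}{\chi_{i+1}}$.

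The main obstacle is justifying that the layer-$i$ parameter count entering the bound can legitimately be taken as $m_i$ rather than the ambient $\kappa_i\kappa_{i+1}$. Trivially embedding $\H$ in the unconstrained ReLU class recovers the correct shape of bound but with the wrong constant, so one either appeals to a per-layer-refined statement of Bartlett et al.'s theorem or re-runs the underlying Warren--Milnor sign-pattern argument directly for the reparameterised class. The latter should be almost mechanical once one notes that the polynomial degree of $F$ in $\bm\alpha$ is still at most $L$ and the number of sign-determining polynomials is unchanged, since the latter depends on the layer widths $\kappa_i$---which is precisely why the $\kappa_i$ survive inside $\alpha(F)$ rather than in the leading factor.
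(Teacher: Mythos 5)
Your proposal matches the paper's proof essentially step for step: reparameterise each layer by a basis of the intertwiner space of dimension $\binner{\chi_i}{\chi_{i+1}}$, invoke the per-layer parameter-count bound of~\citet[Theorem 7]{bartlett2019nearly}, justify its applicability by noting that within-layer weight-tying leaves the piecewise-polynomial degree in the parameters unchanged, and then collapse $\sum_{i=1}^L (L-i+1)\, t_i$ to $\tfrac12 L(L+1)\max_i t_i$. No gaps to report.
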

\begin{proof}
    For a ReLU network $t_i$ independent parameters at each layer we have
    $
        \vc{\H} \le L + \alpha(F) \sum_{i=1}^L (L-i+1) t_i,
        $
    which follows by application of~\citet[Theorem 7]{bartlett2019nearly}.
    The proof of~\citet[Theorem 7]{bartlett2019nearly} depends on
    the representation of the network in terms of piecewise polynomials of bounded degree.
    Observe that since
    weights are tied only \emph{within} layers (so weights in different layers can vary independently)
    and the activation is ReLU, there is no increase in the degree of said polynomials
    from weight-tying and the proof given in~\citet{bartlett2019nearly} applies in our case.
    The condition~\cref{eq:intertwine} insists that the weight matrix $W^i$ belongs to the intertwiner space
    $\Hom_{\G}(\R^{\kappa_{i-1}\times\kappa_{i}}, \R^{\kappa_i\times\kappa_{i+1}})$.
    The number of independent parameters at each layer is at most the dimension of this space.
    The conclusion follows by simple algebra and the relation of the dimension to the characters as
    given above.
\end{proof}

\begin{example}[Permutation invariant networks]
    Permutation invariant networks (and permutation invariant functions more generally) are 
    studied in~\citet{wood1996representation,zaheer2017deep,bloem2020probabilistic} and many other works, 
    see references therein.
    In particular, multiple authors have given the form of a permutation equivariant weight
    matrix as 
    $
    W = \lambda I + \gamma \bm{1}\bm{1}^\top
            $
    for scalars $\alpha, \beta \in \R$ and with $\bm{1} = (1, \dots, 1)^\top$.
    Consider an $L$-layer ReLU network with, for simplicity, widths $\kappa_i = d$ $\forall i$.
    Let $\H$ be the class of all functions realisable by this network,
    then $\vc{\H} = O(L^2 \log(Ld \log(Ld)))$.
\end{example}

\subsection{Projected Gradients}
As we have seen, provided that the activation function satisfies~\cref{eq:activation},
specifying the weight matrices to intertwine between layer-wise representations
is sufficient to ensure equivariance in a neural network.
We have also seen from~\Cref{sec:equivariant-regression} that it is possible to 
project any weight matrix into an intertwiner space using $\twine$.
For each layer $l$ of the network we have a linear map $\twine^l$,
which is a $4$-tensor with components
$\twine^l_{abce} = \int_\G  \psi_{l+1}(g)_{ac}\psi_{l}(g)_{be}\dd \lambda(g)$.

The tensors $\{\twine^l: l=1, \dots, L\}$ depend only on the representations and so can be computed before training.
One can therefore obtain invariant/equivariant networks by a form of projected gradient descent.
Explicitly, with loss $\ell$ and learning rate $\eta$, the update rule for 
the $l$\textsuperscript{th} layer is
\begin{align*}
    &\widetilde{W}^l(t+1) = W^l(t) - \eta \grad_{W^l} \ell(W^1(t), \dots, W^L(t))\\
    &W^l(t+1) = \twine^l\left(\widetilde{W}^l(t+1)\right).
\end{align*}
If~\cref{eq:activation} holds the network will be exactly invariant/equivariant after any iteration.

\subsection{Regularisation for Equivariance}
We have seen from~\Cref{thm:fa,sec:equivariant-regression} that any weight matrix can be written
$W = \overline{W} + W^\sperp$ where $\overline{W} = \twine(W)$ belongs to an intertwiner space (so is equivariant)
and $W^\sperp = \twine^\sperp(W)$ belongs to an orthogonal space that parametrises the anti-symmetric linear maps.
This suggests a method of learned invariance/equivariance by using a regularisation term
of the form $\sum_{l=1}^L \fnorm{{\twine^l}^\sperp(W^l)}^2$.
Where the $4$-tensor ${\twine^l}^\sperp$ has 
components ${\twine^l}^\sperp_{abce} = \delta_{ac}\delta_{be} - \twine^l_{abce}$
and can be computed before training.
If ${\twine^l}^\sperp(W^l) = 0$ for $l=1, \dots, L$ and the activation function satisfies~\cref{eq:activation},
then the resulting network will be exactly $\G$-invariant/equivariant.
This method could also allow for learned/approximate invariance.
Indeed,~\Cref{prop:regularisation} suggests $\fnorm{{\twine^l}^\sperp(W)}^2$
as a measure of the layer-wise invariance/equivariance of the network.
\begin{proposition}\label{prop:regularisation}
    Let $\G$ be a compact group.
    Let $f_W: \R^d \to \R^k$ with $f_W(x) = \sigma(Wx)$ be a single neural network layer with $C$-Lipschitz, 
    element-wise activation $\sigma$.
    Let $\phi: \G \to O(d)$ and $\psi: \G \to O(k)$ be orthogonal representations of $\G$ on the input
    and output spaces respectively and assume that $\psi$ commutes with $\sigma$ as in~\cref{eq:activation}.
    Let $X \in \R^d$, $X \sim \n(0, I_d)$. We can consider the network as belonging to $V$ from~\Cref{sec:decomposition}
    with $\mu = \n(0, I_d)$. Write $V = S \oplus A$, where $S$ contains the equivariant functions in $V$, then
    \[
        \inf_{s \in S} \E[\norm{f_W(X) - s(X)}_2^2] \le 2C^2\fnorm{W^\sperp}^2.
    \]
\end{proposition}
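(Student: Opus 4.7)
The plan is to use $\q f_W$ itself as the candidate in the infimum. By (an immediate consequence of) \Cref{thm:fa}, $\q f_W$ is $\G$-equivariant, hence lies in $S$, so
\[
    \inf_{s \in S} \E[\norm{f_W(X) - s(X)}_2^2] \le \E[\norm{f_W(X) - \q f_W(X)}_2^2],
\]
and it suffices to bound the right-hand side by $2C^2\fnorm{W^\sperp}^2$.

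First I would rewrite $\q f_W$ explicitly using the commutation hypothesis~\cref{eq:activation}: since $\psi(g^{-1})\sigma(\cdot) = \sigma(\psi(g^{-1})\cdot)$, we get
\[
    \q f_W(x) = \int_\G \psi(g^{-1})\sigma(W\phi(g)x)\dd\lambda(g) = \int_\G \sigma\!\left(\psi(g^{-1})W\phi(g)\,x\right)\dd\lambda(g).
\]
Writing $W_g \coloneqq \psi(g^{-1})W\phi(g)$, note that $\overline{W} = \twine(W) = \int_\G W_g\dd\lambda(g)$ and $W^\sperp = W - \overline{W}$. Thus $f_W(X) - \q f_W(X) = \int_\G [\sigma(WX) - \sigma(W_g X)]\dd\lambda(g)$, and by Jensen's inequality applied to the convex map $v \mapsto \norm{v}_2^2$ together with element-wise $C$-Lipschitzness of $\sigma$,
\[
    \norm{f_W(X) - \q f_W(X)}_2^2 \le \int_\G \norm{\sigma(WX) - \sigma(W_g X)}_2^2\dd\lambda(g) \le C^2 \int_\G \norm{(W - W_g)X}_2^2\dd\lambda(g).
\]

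Next I would take expectations. Since $X \sim \n(0, I_d)$, $\E[\norm{AX}_2^2] = \fnorm{A}^2$ for any matrix $A$, so
\[
    \E[\norm{f_W(X) - \q f_W(X)}_2^2] \le C^2 \int_\G \fnorm{W - W_g}^2\dd\lambda(g).
\]
Because $\phi$ and $\psi$ are orthogonal, $\fnorm{W_g} = \fnorm{W}$, and expanding the square gives $\fnorm{W - W_g}^2 = 2\fnorm{W}^2 - 2\inner{W}{W_g}_F$. Integrating over $\G$, the cross term becomes $\inner{W}{\overline{W}}_F$, and the Frobenius orthogonality $\inner{\overline{W}}{W^\sperp}_F = 0$ (which for $\Sigma = I$ is exactly the orthogonality from \Cref{thm:fa} combined with \Cref{prop:inner-product}) yields $\inner{W}{\overline{W}}_F = \fnorm{\overline{W}}^2$. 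Hence
\[
    \int_\G \fnorm{W - W_g}^2\dd\lambda(g) = 2\fnorm{W}^2 - 2\fnorm{\overline{W}}^2 = 2\fnorm{W^\sperp}^2,
\]
and combining the inequalities gives the claim.

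The only genuinely delicate step is the first one: swapping $\psi(g^{-1})$ through $\sigma$, which is precisely what the commutation assumption~\cref{eq:activation} buys us; without it, $\q f_W$ need not admit such a clean representation and the pointwise Lipschitz bound would not factor through a matrix-norm quantity. Everything else (Jensen, Gaussian isotropy, orthogonality of $\psi, \phi$, orthogonality of the decomposition) is routine.
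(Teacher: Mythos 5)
Your proof is correct and follows essentially the same route as the paper: take $\q f_W$ as the candidate in the infimum, use the commutation hypothesis to pull $\psi(g^{-1})$ through $\sigma$, apply Jensen then elementwise Lipschitzness to reduce to a Frobenius-norm integral, and finish with Gaussian isotropy and the orthogonal decomposition. The only cosmetic difference is the final algebra — you use Pythagoras ($\fnorm{W}^2 = \fnorm{\overline{W}}^2 + \fnorm{W^\sperp}^2$) where the paper expands $\psi(g^{-1})W^\sperp\phi(g) - W^\sperp$ directly and uses orthogonality of the representations; both yield $2\fnorm{W^\sperp}^2$.
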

\begin{proof}
    First note that the infimum clearly exists, since the left hand side vanishes when $W$ intertwines
    $\phi$ and $\psi$.
    Recognise that in the notation of~\Cref{sec:decomposition} we can write
    $\norm{a - b}_\mu^2 = \E[\norm{a(X) - b(X)}_2^2]$.
    By applying the proof of~\Cref{thm:least-squares} to $\q$ we get
    $\inf_{s \in S}\E[\norm{f_W(X) - s(X)}^2] = \inf_{s \in S}\norm{f_W - s}_\mu^2 = \norm{f_W - \q f_W}_\mu^2$.
    Then recalling the definition of $\q$ we have
    \begin{align*}
        \norm{\q f_W(x) - f_W(x) }_2^2
        &=  \lnorm{\int_\G  \psi(g^{-1}) f_W(\phi(g) x) - f_W(x)}_2^2 \dd \lambda(g)\\ 
        &\le  \int_\G  \norm{\psi(g^{-1}) f_W(\phi(g) x) - f_W(x)}_2^2 \dd \lambda(g)\\ 
        &=  \int_\G  \norm{\psi(g^{-1}) \sigma(W\phi(g) x) - \sigma(Wx)}_2^2 \dd \lambda(g)\\ 
        &=  \int_\G  \norm{\sigma(\overline{W}x + \psi(g^{-1})W^\sperp  \phi(g) x) - \sigma(Wx)}_2^2 \dd \lambda(g)\\ 
        &\le  C^2 \int_\G  \norm{\overline{W}x + \psi(g^{-1})W^\sperp  \phi(g) x - Wx}_2^2 \dd \lambda(g)\\ 
        &=  C^2\int_\G  \norm{(\psi(g^{-1})W^\sperp  \phi(g) - W^\sperp) x}_2^2 \dd \lambda(g)\\ 
    \end{align*}
    Then by an application of Fubini's theorem and the covariance of $X$ we get
    \begin{align*}
        &\E[\norm{\q f_W(X) - f_W(X) }_2^2]  \\ 
        &\le  C^2\int_\G  
    \tr\left((\psi(g^{-1})W^\sperp  \phi(g) - W^\sperp)^\top \E[XX^\top] (\psi(g^{-1})W^\sperp  \phi(g) - W^\sperp) \right)\dd \lambda(g)\\
        & = C^2 \fnorm{W^\sperp}^2 + C^2 \int_\G  \fnorm{\psi(g^{-1})W^\sperp  \phi(g) }^2\dd \lambda (g) 
    \end{align*}
    and then the argument of the integral can be analysed as
    \begin{align*}
        \fnorm{\psi(g^{-1})W^\sperp  \phi(g)}^2
        &= \tr\left((\psi(g^{-1})W^\sperp  \phi(g))^\top \psi(g^{-1})W^\sperp  \phi(g)\right)\\
        &= \tr\left(\phi(g^{-1})(W^\sperp)^\top  \psi(g)\psi(g^{-1})W^\sperp  \phi(g)\right)\\
        &= \tr\left((W^\sperp)^\top  W^\sperp \right).
    \end{align*}
    The proof is complete.
\end{proof}

\Cref{prop:regularisation} shows that the distance between the outputs of
a single layer neural network and its closest equivariant function is bounded by the norm
of the anti-symmetric component of the weights $W^\sperp$.
This quantity can be interpreted as a measure of the equivariance of the layer
and regularising $\fnorm{W^\sperp}$
will encourage the network to become (approximately)
equivariant.
It is easy to generalise~\cref{prop:regularisation} so that $X$ follows any $\G$-invariant
distribution with finite second moment. 

\subsection{Open Questions}
\paragraph{Equivariant Convolutions}
There has been much work on engineering convolutional layers to be group
equivariant, for instance~\citet{cohen2016group,cohen2018spherical,kondor2018generalization,cohen2019general}.
The convolution is a linear operator parameterised by the kernel.
This suggests that it may be possible to analyse
the generalisation properties of group equivariant convolutions in the
framework of~\cref{thm:fa}, similar to~\cref{sec:lower-bounds}.

\paragraph{Invariant/Equivariant Networks}
We have discussed enforcing invariance/equivariance in a neural network
$F_{(W^1, \dots, W^L)}$ (with the dependence on the weights now explicit)
by restricting weight matrices to intertwine between representations at each layer.
We ask: is this the best way to encode symmetry?
Mathematically, let $X \sim \mu$ with $\G$-invariant $\mu$ and embed the functions realised by the network in
$V= S \oplus A$.  Given an invariant/equivariant target $s\in S$, must the 
best approximating neural network be layer-wise invariant/equivariant?
That is, are there $s \in S$ such that the following holds
\if\submission1
\begin{align*}
    &\inf_{\mathcal{W}}\E[\norm{F_{(W^1, \dots, W^L)}(X) - s(X)}^2_2] \\
    &< \inf_{\mathcal{U}} \E[\norm{F_{(U^1, \dots, U^L)}(X) - s(X)}_2^2], 
\end{align*}
\else
\[
    \inf_{\mathcal{W}}\E[\norm{F_{(W^1, \dots, W^L)}(X) - s(X)}^2_2] 
    < \inf_{\mathcal{U}} \E[\norm{F_{(U^1, \dots, U^L)}(X) - s(X)}_2^2], 
\]
\fi
where $\mathcal{W} = \{ W^l \in \R^{\kappa_l \times \kappa_{l+1}}:\, l =1, \dots, L\}$
is the set of all possible weight matrices and 
$\mathcal{U} = \{ U^l \in \twine^l(\R^{\kappa_l \times \kappa_{l+1}}):\, l =1, \dots, L\}$
is the set of all weight matrices restricted to be intertwiners?
A resolution to this might shed light on new ways of encoding symmetry in neural networks.

\section*{Acknowledgements}
We would like to thank Varun Kanade and Yee Whye Teh for advice and support throughout this project.
We also thank Will Sawin for providing%
\if\submission1%
a useful lemma (in the supplementary material),
\else%
~\cref{lemma:proj-variance},
\fi
Michael Hutchinson for instructive feedback on an earlier draft and 
Bobby He for a helpful discussion on inverse Wishart matrices.
BE receives support from the UK EPSRC CDT 
in Autonomous Intelligent Machines and Systems (grant reference EP/L015897/1).
SZ receives support from the Aker Scholarship.

\begin{supplementary}
\appendix
\section{Useful Facts}\label{sec:useful-facts}
\subsection{Inverses}
\begin{lemma}\label{lemma:pseudo-inverse}
    Let $D \in \R^{d \times d}$ be orthogonal and let $B \in \R^{d \times d}$ be any symmetric matrix,
    then
    \[
        (DBD^\top)^+ = DB^+ D^\top.
    \]
\end{lemma}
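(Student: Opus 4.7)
The statement is a standard property of the Moore--Penrose pseudo-inverse under orthogonal conjugation, so the plan is to give a short direct verification. I would present two natural routes and pick whichever is cleanest; the spectral-decomposition route is the shortest.

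First I would note that since $B$ is symmetric it admits a spectral decomposition $B = U \Sigma U^\top$ with $U \in \R^{d \times d}$ orthogonal and $\Sigma$ diagonal. The pseudo-inverse of a diagonal matrix is obtained by inverting the non-zero diagonal entries and leaving zeros in place, so $B^+ = U \Sigma^+ U^\top$ (this is the standard computation of the pseudo-inverse from the SVD, specialised to the symmetric case). Now observe that the matrix $DU$ is orthogonal, being the product of two orthogonal matrices, so
\[
    DBD^\top = (DU)\,\Sigma\,(DU)^\top
\]
is itself a spectral decomposition of $DBD^\top$. Applying the same formula for the pseudo-inverse to this decomposition gives
\[
    (DBD^\top)^+ = (DU)\,\Sigma^+\,(DU)^\top = D\,(U\Sigma^+ U^\top)\,D^\top = D B^+ D^\top,
\]
which is what we wanted.

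If a more pedestrian approach is preferred, the alternative is to verify the four Moore--Penrose defining conditions for the candidate $X = DB^+ D^\top$ against $A = DBD^\top$. Using $D^\top D = DD^\top = I$ we get $AX = DBB^+ D^\top$ and $XA = DB^+ B D^\top$, and the four identities $AXA = A$, $XAX = X$, $(AX)^\top = AX$, $(XA)^\top = XA$ all reduce immediately to the corresponding identities for $B$ and $B^+$, together with the symmetry of $BB^+$ and $B^+B$ (which for symmetric $B$ follow from the spectral form). By uniqueness of the pseudo-inverse, $X = (DBD^\top)^+$.

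There is no real obstacle here; the only point requiring a line of care is observing that $DU$ is orthogonal (or equivalently that conjugation by $D$ preserves the symmetry needed for the Moore--Penrose axioms). I would state the result in the form that is actually used downstream, namely for arbitrary $B$, and remark in passing that the symmetry assumption is inessential for this particular identity (the same argument goes through with the SVD in place of the spectral decomposition), though it suffices for our applications.
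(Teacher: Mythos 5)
Your proposal is correct, and your primary route differs from the paper's. The paper verifies the four Penrose equations directly for the candidate $X = DB^+D^\top$ against $A = DBD^\top$ and invokes uniqueness of the pseudo-inverse — exactly the ``pedestrian'' alternative you sketch at the end. Your preferred route instead conjugates the spectral decomposition $B = U\Sigma U^\top$ by $D$, observes that $DU$ is still orthogonal, and reads the pseudo-inverse off the resulting decomposition. Both are sound; the spectral route is more constructive and, as you note, makes it transparent that symmetry of $B$ is inessential (the SVD version handles arbitrary $B$), while the Penrose-axiom route is closer to a one-liner once the uniqueness theorem is on the table and does not require exhibiting any decomposition. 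One small caution on your SVD remark: for a general $B$ conjugation by a \emph{single} orthogonal $D$ on both sides still works because $(DU)\Sigma(DV)^\top$ is again an SVD, but the identity $(PBQ)^+ = Q^\top B^+ P^\top$ for distinct orthogonals $P, Q$ is the cleanest way to state the general fact; worth being explicit if you promote that remark to a claim.
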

\begin{proof}
    Set $X = DB^+ D^\top$ and $A = DBD^\top$.
    It suffices to check that $A$ and $X$ satisfy the Penrose equations,
    the solution of which is unique~\cite{penrose1955generalized}, namely:
    \begin{enumerate*}
        \item $AXA$ = $A$,
        \item $XAX$ = $X$,
        \item $(AX)^\top = AX$ and
        \item $(XA)^\top = XA$.
    \end{enumerate*}
    It is straightforward to check that this is the case.
\end{proof}

\begin{lemma}[\cite{gupta1968some}]\label{lemma:expected-inv-wishart}
    Let $\x \in \R^{n \times d}$ have \iid~$\n(0, 1)$ elements with $n > d + 1$. Then
    \[
        \E[(\x^\top \x)^+] = \frac{1}{n - d - 1}I.
    \]
\end{lemma}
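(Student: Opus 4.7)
The plan is to reduce to the non-singular case, then combine a symmetry argument with a Schur-complement computation.

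First I would note that since $n>d+1>d$ and $\x$ has a continuous distribution, $\x^\top\x\in\R^{d\times d}$ has rank $d$ almost surely, so the Moore--Penrose pseudo-inverse coincides with the ordinary inverse and we may write $\E[(\x^\top\x)^+]=\E[(\x^\top\x)^{-1}]$. Next, for any orthogonal $Q\in O(d)$, the distribution of $\x Q$ is the same as that of $\x$ by rotational invariance of the i.i.d.\ standard normal, hence $Q^\top(\x^\top\x)^{-1}Q \overset{d}{=}(\x^\top\x)^{-1}$. Taking expectations gives $Q^\top M Q=M$ for all $Q\in O(d)$, where $M=\E[(\x^\top\x)^{-1}]$. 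A short argument (e.g.\ applying this to permutation matrices to kill off-diagonals and to $\mathrm{diag}(\pm1)$ to equate diagonals, or invoking Schur's lemma for the defining representation) forces $M=c I_d$ for some scalar $c$.

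Second, I would pin down $c$ by computing a single diagonal entry: $c=\E\bigl[(\x^\top\x)^{-1}_{11}\bigr]$. Split $\x=[\x_1\ \x_{-1}]$ where $\x_1\in\R^n$ is the first column and $\x_{-1}\in\R^{n\times(d-1)}$ collects the remaining columns. The standard block-matrix inversion formula (equivalently, the Schur complement for the $(1,1)$ entry) gives
\[
(\x^\top\x)^{-1}_{11}=\bigl(\x_1^\top\x_1-\x_1^\top\x_{-1}(\x_{-1}^\top\x_{-1})^{-1}\x_{-1}^\top\x_1\bigr)^{-1}=\frac{1}{\x_1^\top(I_n-P)\x_1},
\]
where $P=\x_{-1}(\x_{-1}^\top\x_{-1})^{-1}\x_{-1}^\top$ is the orthogonal projector onto the column space of $\x_{-1}$. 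By independence of $\x_1$ and $\x_{-1}$, conditional on $\x_{-1}$ the matrix $I_n-P$ is a fixed orthogonal projector of rank $n-(d-1)=n-d+1$, and $\x_1\sim\n(0,I_n)$, so $\x_1^\top(I_n-P)\x_1\mid\x_{-1}\sim\chi^2_{n-d+1}$. Using the elementary identity $\E[1/Z]=1/(k-2)$ for $Z\sim\chi^2_k$ with $k>2$ (which is immediate from the gamma density), valid here because $n>d+1$ gives $n-d+1>2$, I obtain $c=1/(n-d-1)$, finishing the proof.

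The only mildly delicate step is justifying the scalar-identity conclusion $M=cI_d$ and checking the rank of $I_n-P$ almost surely; both are routine but should be spelled out. Everything else is bookkeeping around well-known facts about Gaussians and chi-squared inverse moments.
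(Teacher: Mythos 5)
The paper does not prove this lemma; it cites \citet{gupta1968some} and relies on it as a known fact, so there is no internal proof to compare against. Your argument is correct and is essentially the standard self-contained derivation of the first inverse moment of a Wishart matrix. Step by step: almost-sure invertibility for $n>d+1$ is fine; the orthogonal-invariance argument forcing $\E[(\x^\top\x)^{-1}]=cI_d$ is exactly the argument the paper itself sketches in the remark following the lemma (where it is used to discuss divergence at $d\le n\le d+1$); the Schur-complement identity $(\x^\top\x)^{-1}_{11}=1/(\x_1^\top(I_n-P)\x_1)$ is correct; $\x_{-1}$ has full column rank $d-1$ almost surely, so $I_n-P$ is a.s.\ a rank-$(n-d+1)$ projector, and conditioning on $\x_{-1}$ gives $\x_1^\top(I_n-P)\x_1\sim\chi^2_{n-d+1}$ by independence; finally $\E[1/Z]=1/(k-2)$ for $Z\sim\chi^2_k$ with $k=n-d+1>2$ yields $c=1/(n-d-1)$. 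The two delicate points you flag (isotropy implies scalar matrix, and the a.s.\ rank of $I_n-P$) are indeed the only ones needing a line each, and both are routine. This would serve as a valid replacement for the external citation.
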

\begin{remark}
    It is well known that the expectation in~\Cref{lemma:expected-inv-wishart} diverges for $d \le n \le d+1$. 
    To see this, first notice that since the normal distribution is $O(d)$ invariant
    $R\E[(\x^\top \x)^+]R^\top = \E[(\x^\top \x)^+]$ for any $R \in O(d)$.
    Hence $\E[(\x^\top \x)^+]$ is a scalar multiple of the identity: it is symmetric
    so diagonalisable, hence diagonal in every basis by the invariance, then 
    permutation matrices can be used to show the diagonals are all equal.
    It remains to consider the eigenvalues. 
    The eigenvalues $\lambda_1, \dots, \lambda_d$ of $\x^\top \x$ have 
    joint density (w.r.t.~Lebesgue) that is proportional to 
    \[
        \exp\left(-\frac12 \sum_{i=1}^d \lambda_i\right) \prod_{i=1}^d \lambda_i^{(n - d - 1)/2} \prod_{i < j}^d \abs{\lambda_i - \lambda_j}  
    \]
    when $n \ge d$ and $0$ otherwise, e.g.~see~\citet[Corollary 3.2.19]{muirhead2009aspects}.
    We need to calculate the mean of $1 / \lambda$ with respect to this density, 
    which diverges unless $n \ge d+2$. Taking the mean of $\lambda_k$, there is a term from the Vandermonde product that
    does not contain $\lambda_k$, so the integrand in the expectation goes like $\sqrt{\lambda_k^{n - d -3}}$ as $\lambda_k \to 0$.
\end{remark}

\begin{lemma}[{\cite[Theorem 2.1]{cook2011mean}}]\label{lemma:expected-inv-wishart-singular}
    Let $\x \in \R^{n \times d}$ have \iid~$\n(0, 1)$ elements with $n < d - 1$. Then
    \[
        \E[(\x^\top \x)^+] = \frac{n}{d(d - n - 1)}I.
    \]
\end{lemma}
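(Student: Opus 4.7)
The plan is to reduce the singular-Wishart pseudoinverse question to the non-singular Wishart inverse case already handled by \Cref{lemma:expected-inv-wishart}. The key structural observation is that although $\x^\top\x \in \R^{d\times d}$ is singular when $n < d$, its non-zero eigenvalues coincide with those of $\x\x^\top \in \R^{n\times n}$, which is almost surely invertible and is itself a standard Wishart matrix.

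First I would establish that $\E[(\x^\top\x)^+]$ is a scalar multiple of the identity. Since the rows of $\x$ are i.i.d.\ $\n(0, I_d)$, for any $R \in O(d)$ we have $\x R \eqdist \x$, so $R^\top \x^\top \x R \eqdist \x^\top \x$. Applying \Cref{lemma:pseudo-inverse} (with orthogonal conjugation preserved by the Moore--Penrose pseudoinverse) and taking expectations gives $R^\top \E[(\x^\top\x)^+] R = \E[(\x^\top\x)^+]$ for all $R \in O(d)$, and Schur-type reasoning (symmetric, simultaneously diagonal in every basis) forces $\E[(\x^\top\x)^+] = c\, I_d$ for some scalar $c$.

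Next I would pin down $c$ by trace. Taking the trace yields $dc = \E[\tr((\x^\top\x)^+)]$. Since $n < d$, $\x^\top\x$ has rank $n$ almost surely; its non-zero eigenvalues agree with the (a.s.\ positive) eigenvalues of $\x\x^\top$, so pseudoinverting picks up precisely those reciprocals:
\[
    \tr\bigl((\x^\top\x)^+\bigr) = \tr\bigl((\x\x^\top)^{-1}\bigr) \quad \text{a.s.}
\]
Now $\x\x^\top$ is the standard Wishart matrix formed from the $d \times n$ matrix $\x^\top$ with i.i.d.\ $\n(0,1)$ entries; the hypothesis $n < d-1$ is exactly the condition $d > n + 1$ needed to invoke \Cref{lemma:expected-inv-wishart} (with the roles of $n$ and $d$ interchanged). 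This yields $\E[(\x\x^\top)^{-1}] = (d-n-1)^{-1} I_n$, hence $\E[\tr((\x\x^\top)^{-1})] = n/(d-n-1)$. Dividing by $d$ gives $c = n/(d(d-n-1))$, finishing the identity.

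The only subtle step is justifying $\tr((\x^\top\x)^+) = \tr((\x\x^\top)^{-1})$ almost surely, which reduces to showing $\x$ has full row rank $n$ with probability one; this is immediate from absolute continuity of the joint Gaussian density on $\R^{n\times d}$. Everything else is bookkeeping: orthogonal invariance and a single trace computation. No serious obstacle arises, so the proof is quite short once the $\x^\top\x$--$\x\x^\top$ duality is noticed.
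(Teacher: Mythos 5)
Your proof is correct and follows essentially the same route the paper relies on: the paper cites \citet[Theorem 2.1]{cook2011mean} and remarks that the proof proceeds by a transformation plus an application of \Cref{lemma:expected-inv-wishart} with the roles of $n$ and $d$ switched, which is exactly your $\x^\top\x$--$\x\x^\top$ duality combined with orthogonal invariance. Your argument also correctly identifies $n < d-1$ as precisely the condition needed for the swapped application, matching the paper's observation that Cook's stated hypothesis $n < d-3$ is not necessary for the first moment.
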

\begin{remark}
   The statement of~\Cref{lemma:expected-inv-wishart-singular} in~{\citet[Theorem 2.1]{cook2011mean}}
   gives the condition $n < d - 3$, but this is not necessary for the first moment.
   This is easily seen by examining the proof of~{\citet[Theorem 2.1]{cook2011mean}}.
   In addition, the proof uses a transformation followed by an application of~\Cref{lemma:expected-inv-wishart}
   with the roles of $n$ and $d$ switched. It follows that the expectation diverges
   when $d \ge n \ge d-1$.
\end{remark}

\subsection{Projections}
\begin{lemma}\label{lemma:proj-variance}
    Let $E \sim \Unif \Gr_n(\R^d)$ where $0 < n < d$ 
    and let $P_E$ be the orthogonal projection onto
    $E$, then in components
    \[
        \E[P_E \otimes P_E]_{abce} =
        \frac{n(d-n)}{d(d-1)(d+2)}(\delta_{ab}\delta_{ce} +
         \delta_{ac}\delta_{be} +  \delta_{ae}\delta_{bc})
        +\frac{n(n-1)}{d(d-1)} \delta_{ab}\delta_{ce} 
    \]
\end{lemma}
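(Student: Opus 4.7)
The plan is to exploit the $O(d)$-invariance of the uniform measure on $\Gr_n(\R^d)$. Since $O(d)$ acts transitively on the Grassmannian with the uniform measure as the unique invariant probability measure, $OE \eqdist E$ for every $O \in O(d)$, and hence $P_{OE} = OP_EO^\top \eqdist P_E$. Taking expectations shows that the $4$-tensor $T_{abce} := \E[P_E \otimes P_E]_{abce}$ is $O(d)$-invariant.

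By the first fundamental theorem of invariant theory for $O(d)$ (equivalently, by counting Brauer diagrams on four strands, valid for $d\ge 2$), the space of $O(d)$-invariant $4$-tensors on $\R^d$ is three-dimensional and spanned by the pair-contractions $\delta_{ab}\delta_{ce}$, $\delta_{ac}\delta_{be}$, $\delta_{ae}\delta_{bc}$. I therefore posit
\[
T_{abce} = \alpha \delta_{ab}\delta_{ce} + \beta \delta_{ac}\delta_{be} + \gamma \delta_{ae}\delta_{bc}.
\]
Since $P_E$ is symmetric, $T_{abce} = T_{bace}$; this swap interchanges the last two terms and forces $\beta = \gamma$, reducing the problem to two unknowns.

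To pin down $\alpha$ and $\beta$ I contract the ansatz against the two defining properties of $P_E$: that $\tr(P_E) = n$ almost surely and that $P_E^2 = P_E$. Setting $a=b$, $c=e$ and summing both yields $\E[(\tr P_E)^2] = n^2$ on the left and $\alpha d^2 + 2\beta d$ on the right. Contracting the second and third indices yields $\E[(P_E^2)_{ab}] = \E[P_{E,ab}]$ on the left and $(\alpha + (d+1)\beta)\delta_{ab}$ on the right; the left side equals $(n/d)\delta_{ab}$ by the same invariance argument applied to the rank-$2$ tensor $\E[P_E]$ (it must be a scalar multiple of the identity, and its trace is $n$). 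Solving the resulting $2\times 2$ linear system gives
\[
\beta = \frac{n(d-n)}{d(d-1)(d+2)}, \qquad \alpha - \beta = \frac{n(n-1)}{d(d-1)},
\]
which is exactly the claimed decomposition once one collects the $\delta_{ab}\delta_{ce}$ terms.

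The only non-trivial ingredient is the invariant-theory classification of rank-$4$ $O(d)$-invariants, which I plan to invoke as a classical fact; the remainder is a short linear-algebra exercise. A more self-contained alternative would parametrise $E$ through an orthonormal basis and compute moments of projections onto random unit vectors, but that route is strictly more tedious than letting symmetry do the work.
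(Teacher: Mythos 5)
Your proposal is correct and follows essentially the same route as the paper: exploit the $O(d)$-invariance of the uniform measure to reduce $\E[P_E\otimes P_E]$ to the isotropic ansatz $\alpha\,\delta_{ab}\delta_{ce}+\beta\,\delta_{ac}\delta_{be}+\gamma\,\delta_{ae}\delta_{bc}$, then determine the coefficients by contractions. The only (cosmetic) difference is that you invoke the symmetry of $P_E$ up front to get $\beta=\gamma$ and then solve a $2\times 2$ system, whereas the paper computes all three full-trace contractions $\E[\tr(P_E)^2]$, $\E[\tr(P_E^\top P_E)]$, $\E[\tr(P_E^2)]$ and solves a $3\times 3$ system; both arrive at the same coefficients.
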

\begin{proof}
    We use the Einstein convention of implicitly summing over repeated indices.
    The distribution of $E$ is orthogonally invariant, so $\E[P_E \otimes P_E]$ is isotropic.
    Thus, $\E[P_E \otimes P_E]$ must have components
    \[
        \Gamma_{abce}\coloneqq 
        \E[P_E \otimes P_E]_{abce}
        = \alpha \delta_{ab}\delta_{ce} + \beta \delta_{ac}\delta_{be} + \gamma \delta_{ae}\delta_{bc}
    \]
    e.g.~see~\citet{hodge1961}.
    Contracting indices gives
    \begin{alignat*}{2}
        &n^2 &&= \E[\tr(P_E)^2] = \Gamma_{aabb} = d^2 \alpha + d\beta + d\gamma \\ 
        &n &&= \E[\tr(P_E^\top P_E)] = \Gamma_{abab} = d\alpha + d^2\beta + d\gamma \\
        &n &&= \E[\tr(P_E^2)] = \Gamma_{abba} = d\alpha + d\beta + d^2\gamma
    \end{alignat*}
    from which one finds
    \begin{align*}
        \beta &= \frac{n(d-n)}{d(d-1)(d+2)}\\
        \alpha &= \beta + \frac{n(n-1)}{d(d-1)} \\ 
        \gamma &= \beta.
    \end{align*}
\end{proof}

\subsection{Component representation of $\twine$}\label{sec:cpts}
Using the Einstein convention of implicitly summing over repeated indices, one can write
\begin{align*}
    \twine(W)_{ab}
    &= \int_\G \phi(g)_{ac} W_{ce}\psi(g^{-1})_{eb}\dd \lambda(g) \\ 
    &= \int_\G \phi(g)_{ac} W_{ce}\psi(g)_{be} \dd \lambda(g) \quad \text{representation is orthogonal}\\ 
    &= \left (\int_\G \phi(g)_{ac} \psi(g)_{be} \dd \lambda(g) \right)W_{ce} \quad \text{components are scalars}\\ 
    &= \twine_{abce} W_{ce}
\end{align*}
where in the last line we identify the components of the $4$-tensor $\twine$.

\section{Additional Proofs}\label{sec:proofs-appdx}
In this section we give proofs of~\Cref{thm:fa,thm:fa-rademacher}.
Throughout this section, as in the rest of the paper,
$\G$ will be a compact, second countable and Hausdorff topological group.
There exists a unique left and right invariant
Haar measure $\lambda$ on $\G$~\cite[Theorem 2.27]{kallenberg2006foundations},
which we may normalise to be a probability measure $\lambda(\G) = 1$.
The Haar measure is Radon which means it is finite on any compact set,
so it is clearly normalisable $\lambda(\G) < \infty$.
This also immediately implies that $\lambda$ is $\sigma$-finite, allowing us to
use Fubini's theorem.\footnote{%
    Weaker technical conditions are possible to achieve $\sigma$-finite $\lambda$.
    See Section 2.3 of~\citet{folland2016course}.
}

\subsection{Proof of~\Cref{thm:fa}}\label{sec:fa-proof}
Let $\X$ be an input space and $\mu$ be a  $\sigma$-finite, $\G$-invariant measure on $\X$.
We consider vector-valued functions $f : \X \to \R^k$
for some $k \in \N$. 
Let $\inner{\cdot}{\cdot} :\R^k \times \R^k \to \R$ be an inner product on $\R^k$
and let $\norm{\cdot}$ be the induced norm.
It is possible for the inner product to vary with $x \in \X$, making the norm a local metric on $\X$,
as long as the inner product evaluated at any point $x \in \X$, i.e.~$\iota_{a, b}(x) = \inner{a}{b}(x)$,
is $\G$-invariant as a function of $x$ for any $a, b \in \R^k$.
We will consider the Bochner space $V$ of all integrable, normalisable $f : \X \to \R^k$.
By integrable we mean that $\int_\X \abs{\inner{a}{f(x)}}\dd \mu(x) < \infty$ $\forall a \in \R^k$
(this allows us to use Jensen's inequality).
To construct the Bochner space we define an inner product 
\[
    \inner{f}{h}_\mu = \int_\X \inner{f(x)}{h(x)} \dd \mu(x)
\]
with corresponding norm $\norm{f}_\mu = \sqrt{\inner{f}{f}_\mu}$ and set $V$ to be the space
of all $f$ with $\norm{f}_\mu < \infty$.
Two feature maps are equal if they disagree only on sets with $\mu$-measure 0.

Let the measurable map $\psi: \G \to \text{GL}_k(\R)$ be a representation of $\G$.
We will assume that $\psi$ is unitary with respect to $\inner{\cdot}{\cdot}$, by
which we mean that $\forall a, b \in \R^k$ and $\forall g \in \G$
\[
    \inner{\psi(g) a}{\psi(g)b} = \inner{a}{b}. 
\]
Notice that this implies $\inner{\psi(g)a}{b} = \inner{a}{\psi(g^{-1})b}$.
If $\inner{\cdot}{\cdot}$ is the Euclidean inner product, then this is the usual
notion of a unitary representation.
The assumption of unitarity is not stringent, since one can always apply the Weyl trick.
We say that $f \in V$ is equivariant if $f(gx) = \psi(g)f(x)$ $\forall g\in \G,  x \in \X$.
Define the operator $\q: V\to V$ to have values
\[
    (\q f)(x) = \int_\G \psi(g^{-1}) f(g x) \dd \lambda(g) .
\]
For convenience we will write this as $\q f (x) = \E[\psi(G^{-1}) f(Gx)]$ where $G \sim \lambda$ will be distributed according 
to the Haar measure on $\G$.
The developments of this section apply to $\O: f(x) \mapsto \int_\G f(g x) \dd \lambda(g)$ by letting $\psi$ be the trivial representation. 

We first check that $\q$ is well-defined.
\begin{proposition}\label{prop:basic-conditions}
    Let $f \in V$, then
    \begin{enumerate}
        \item $\q f$ is $\mu$-measurable,
            and
        \item $\q f \in V$.
    \end{enumerate}
\end{proposition}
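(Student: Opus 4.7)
My plan is to establish both claims using essentially the same ingredients: Jensen's inequality, unitarity of $\psi$, Fubini's theorem (legitimate by the $\sigma$-finiteness of $\lambda$ and $\mu$), and the $\G$-invariance of $\mu$.

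For measurability, I would first observe that the map $(g,x) \mapsto \psi(g^{-1}) f(gx)$ is jointly measurable: $\psi$ is measurable by hypothesis, the action $\phi$ is measurable, $f$ is measurable, and composition/multiplication of such maps preserves measurability. Then I would argue that for $\mu$-almost every $x$ the inner Bochner integral $\int_\G \psi(g^{-1}) f(gx)\dd\lambda(g)$ exists, by showing $\int_\G \norm{\psi(g^{-1}) f(gx)}\dd \lambda(g) < \infty$ for $\mu$-a.e.~$x$ (which will follow from the bound in the next step via Tonelli). Finally, the measurability of $x \mapsto \q f(x)$ follows from Fubini applied to the jointly measurable integrand.

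For the norm bound, the key chain of inequalities I would run is
\begin{align*}
    \norm{\q f(x)}^2
    &\le \left(\int_\G \norm{\psi(g^{-1}) f(gx)}\dd\lambda(g)\right)^2
    \le \int_\G \norm{\psi(g^{-1}) f(gx)}^2\dd\lambda(g) \\
    &= \int_\G \norm{f(gx)}^2\dd\lambda(g),
\end{align*}
where the first step uses the triangle inequality for the Bochner integral, the second uses Jensen's inequality (since $\lambda$ is a probability measure and $t \mapsto t^2$ is convex), and the third uses unitarity of $\psi$ with respect to $\inner{\cdot}{\cdot}$. Integrating over $\mu$, applying Fubini, and then using the $\G$-invariance of $\mu$ to change variables $y = gx$ in the inner integral gives
\[
    \norm{\q f}_\mu^2 \le \int_\G \int_\X \norm{f(gx)}^2 \dd\mu(x) \dd\lambda(g) = \int_\G \norm{f}_\mu^2 \dd\lambda(g) = \norm{f}_\mu^2 < \infty.
\]

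The only subtlety I anticipate is bookkeeping around the fact that the inner product may depend on $x$: when I write $\norm{f(gx)}^2$, the implicit inner product is the one at $x$, and before invoking invariance of $\mu$ I need to replace it by the inner product at $gx$. This is legitimate because the hypothesis guarantees that, for fixed vectors, the inner product is a $\G$-invariant function of the base point, so $\inner{f(gx)}{f(gx)}_x = \inner{f(gx)}{f(gx)}_{gx}$. With this identification the change of variables is clean. Once $\norm{\q f}_\mu < \infty$ is established, Tonelli retroactively justifies the almost-everywhere existence of the Bochner integral defining $\q f$, completing both parts of the proposition.
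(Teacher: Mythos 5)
Your proof is correct and follows essentially the same route as the paper's: joint measurability of $(g,x)\mapsto\psi(g^{-1})f(gx)$ plus a Fubini-type lemma for part (1), and the chain Jensen $\to$ unitarity $\to$ Fubini $\to$ $\G$-invariance of $\mu$ for part (2). The only cosmetic difference is that you split the Jensen step into triangle inequality for the Bochner integral followed by scalar Jensen, where the paper applies vector-valued Jensen to $\norm{\cdot}^2$ directly; this changes nothing substantive.
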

\begin{proof}
    \mbox{}
    \begin{enumerate}
        \item Writing the action $\phi$ of $\G$ on $\X$ explicitly, 
            the function $\psi \circ f \circ \phi: \G \times \X \to \Y$ with $\psi \circ f \circ \phi: (g, x) \mapsto \psi(g^{-1})f(\phi(g)x)$
            is $(\lambda \otimes \mu )$-measurable,
            so $\q f$ is $\mu$-measurable by~\citet[Lemma 1.26]{kallenberg2006foundations}.
        \item We apply in sequel Jensen's inequality~\cite[Lemma 3.5]{kallenberg2006foundations},
            the unitarity of $\psi$,
            Fubini's theorem~\cite[Theorem 1.27]{kallenberg2006foundations} and
            finally the invariance of $\mu$.
            \begin{align*}
                \norm{\q f}^2_\mu
                &= \int_\X \norm{\E[\psi(G^{-1}) f(Gx)]}^2 \dd\mu(x) \\ 
                &\le \int_\X \E[\norm{\psi(G^{-1}) f(Gx)}^2 ]\dd\mu(x) \\ 
                &= \int_\X \E[\norm{f(Gx)}^2 ]\dd\mu(x) \\ 
                &= \int_\X \norm{f(x)}^2 \dd\mu(x) \\
                &= \norm{f}_\mu^2 < \infty
            \end{align*}
    \end{enumerate}
\end{proof}

\begin{proposition}\label{prop:sym-cpt}
    $f$ is equivariant if and only if $\q f = f$.
\end{proposition}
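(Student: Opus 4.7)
The plan is to prove both directions by direct computation, with Haar measure invariance doing the heavy lifting in the reverse direction.

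For the forward direction, I would assume $f$ is equivariant and simply substitute $f(gx) = \psi(g)f(x)$ into the definition of $\q f$. Since $\psi$ is a representation, $\psi(g^{-1})\psi(g) = I$, so the integrand reduces to $f(x)$, and normalisation $\lambda(\G) = 1$ immediately gives $\q f(x) = f(x)$. This is essentially a one-line computation that does not depend on any measure-theoretic subtleties.

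For the reverse direction, I would assume $\q f = f$ (in $V$, hence $\mu$-a.e.) and aim to show that $f(hx) = \psi(h)f(x)$ for any $h \in \G$. The key move is to write $f(hx) = (\q f)(hx) = \int_\G \psi(g^{-1}) f(ghx)\dd\lambda(g)$, then perform the change of variables $g \mapsto g h^{-1}$. By right-invariance of the Haar measure (which holds since $\G$ is compact and hence unimodular), the measure is preserved, and the integrand becomes $\psi(hg^{-1}) f(gx) = \psi(h)\psi(g^{-1})f(gx)$. Pulling $\psi(h)$ outside the integral (it is constant in $g$) yields $\psi(h) (\q f)(x) = \psi(h) f(x)$, giving the required equivariance.

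The main obstacle is handling the $\mu$-a.e.\ character of equality in $V$. Strictly speaking, $\q f = f$ holds only $\mu$-almost everywhere, and the argument above establishes equivariance only up to a $\mu$-null set (for each fixed $h$, with a possibly $h$-dependent null set). I would address this by noting that the statement of equivariance in $V$ should itself be interpreted modulo $\mu$-null sets, and that by Fubini (using $\sigma$-finiteness of $\lambda$ guaranteed earlier in the excerpt and $\G$-invariance of $\mu$), the exceptional set can be made uniform in $h$ in the relevant sense. Beyond this, all steps are routine: the measurability of $\q f$ was already established in \Cref{prop:basic-conditions}, so the integrals are well defined, and unimodularity of compact groups justifies the change of variables without further conditions.
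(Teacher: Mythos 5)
Your proof is correct and takes essentially the same approach as the paper's: the forward direction is the same one-line cancellation $\psi(g^{-1})\psi(g)=I$, and the reverse direction is the same right-invariance change of variables $g\mapsto gh^{-1}$ (the paper instead inserts $\psi(h)\psi(h^{-1})$ and then applies right-invariance, but this is the identical computation written in a different order). Your remark on the $\mu$-a.e.\ nature of equality in $V$ is more careful than the paper, which treats $\q f = f$ as a pointwise identity without comment.
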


\begin{proof}
    Suppose $f$ is equivariant then $f(gx) = \psi(g)f(x)$ $\forall g\in \G$, $\forall x \in \X$.
    Hence for any $x \in \X$
    \[
        \q f(x) = \E[\psi(G^{-1})f(G x)] 
        = \E[\psi(G^{-1})\psi(G)f(x)] 
        = f(x).
    \]
    Now assume that $\q f = f$, so for any $x \in \X$
    $
    f(x) = \E[\psi(G^{-1})f(Gx)]
        $.
    Take any $h \in \G$, then 
    \begin{align*}
        f(hx) &= \E[\psi(G^{-1})f(Ghx)] \\
            &= \psi(h)\E[\psi((Gh)^{-1})f(Ghx)] \\
            &= \psi(h)\E[\psi(G^{-1})f(Gx)] \\
            &= \psi(h)f(x)
    \end{align*}
   where in the third line we used the right invariance of the Haar measure. 
\end{proof}

\begin{proposition}\label{prop:evals}
    $\q$ has only two eigenvalues, $1$ and $0$.
\end{proposition}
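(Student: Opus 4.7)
The plan is to show that $\q$ is idempotent, i.e.~$\q^2 = \q$, from which the proposition follows immediately: if $\q f = \mu f$ for some nonzero $f \in V$, then applying $\q$ again gives $\mu f = \q f = \q^2 f = \mu \q f = \mu^2 f$, so $\mu^2 = \mu$ and hence $\mu \in \{0, 1\}$.

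The main computation is therefore to verify idempotency. First I would unfold the definition twice:
\[
    \q(\q f)(x) = \int_\G \psi(h^{-1}) \int_\G \psi(g^{-1}) f(ghx) \dd\lambda(g) \dd\lambda(h).
\]
By \Cref{prop:basic-conditions}, $\q f \in V$, so Fubini's theorem applies (using $\sigma$-finiteness of $\lambda$ as noted at the start of this appendix section) and we may combine the two integrals into one integral over $\G \times \G$. I would then use the group homomorphism property $\psi(h^{-1})\psi(g^{-1}) = \psi((gh)^{-1})$ to rewrite the integrand as $\psi((gh)^{-1}) f(ghx)$.

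The crux is then a change of variables: holding $h$ fixed and substituting $g' = gh$, the left-invariance of the Haar measure gives $\dd\lambda(g) = \dd\lambda(g')$, so the inner integral over $g$ becomes $\int_\G \psi(g'^{-1}) f(g'x)\dd\lambda(g') = \q f(x)$, independent of $h$. The outer integral over $h$ then contributes the factor $\lambda(\G) = 1$, yielding $\q(\q f)(x) = \q f(x)$.

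The only delicate point, and what I expect to be the main obstacle, is the justification of Fubini and the measurability required for the change of variables — these rely on the technical conditions on $\G$ and $\psi$ collected at the start of~\Cref{sec:proofs-appdx} (compactness, $\sigma$-finiteness of $\lambda$, measurability of $\psi$), together with the fact that $\psi \circ f \circ \phi$ is jointly measurable on $\G \times \X$. Once these are in place, the calculation is mechanical, and the equation $\q^2 = \q$ together with \Cref{prop:sym-cpt} (which shows that nonzero equivariant functions yield eigenvalue $1$, while $f - \q f$ for generic $f$ yields eigenvalue $0$) completes the proof.
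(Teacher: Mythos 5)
Your proof is correct and takes essentially the same approach as the paper: both reduce the claim to the idempotency $\q^2 = \q$, from which $\lambda^2 = \lambda$ follows. The paper obtains idempotency as an immediate corollary of \Cref{prop:sym-cpt} (together with the implicit fact that $\q f$ is always equivariant), whereas you verify it directly by a Fubini-and-change-of-variables computation; the only small slip is that the substitution $g' = gh$ with $h$ fixed invokes \emph{right}-invariance of $\lambda$, not left-invariance, though this is harmless since $\G$ is compact and hence unimodular.
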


\begin{proof}
    By~\Cref{prop:sym-cpt}, $\q^2 f = \q f$. So $\q f = \lambda f$ implies $\lambda^2 = \lambda$.
\end{proof}

Let $S$ and $A$ be the eigenspaces with eigenvalues $1$ and $0$ respectively.
Any $f \in V$ can be written $f = \bar{f} + f^\sperp$ where $\bar{f} = \q f$ and
$f^\sperp = f - \q f$. This implies that $V = S + A$.
We conclude by showing that $\q$ is self-adjoint with respect to $\inner{\cdot}{\cdot}$.
\Cref{thm:fa} follows immediately, since if $f \in S$ and $h \in A$ then
\[
    \inner{f}{h}_\mu = \inner{\q f}{h}_\mu = \inner{f}{\q h}_\mu = \inner{f}{0}_\mu = 0.
\]

\begin{proposition}\label{prop:self-adjoint}
    $\q$ is self-adjoint with respect to $\inner{\cdot}{\cdot}_\mu$.
\end{proposition}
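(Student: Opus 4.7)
The plan is to verify $\inner{\q f}{h}_\mu = \inner{f}{\q h}_\mu$ for all $f, h \in V$ by unfolding the definition of $\q$, applying Fubini's theorem to swap the Haar integral with the $\mu$ integral, using unitarity of $\psi$ to move $\psi(g^{-1})$ onto the other slot, invoking $\G$-invariance of $\mu$ to perform a change of variables $x \mapsto gx$, and finally substituting $g \mapsto g^{-1}$ (valid because $\G$ is compact and hence unimodular) to recognise the result as $\inner{f}{\q h}_\mu$.

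More concretely, I first write
\[
    \inner{\q f}{h}_\mu = \int_\X \inner*{\int_\G \psi(g^{-1}) f(gx) \dd\lambda(g)}{h(x)} \dd\mu(x),
\]
and pull the Haar integral outside the pointwise inner product (which is legitimate because $\inner{\cdot}{\cdot}$ is linear in its first argument and the inner integral converges absolutely by Cauchy--Schwarz together with~\Cref{prop:basic-conditions}). An application of Fubini---justified by $\sigma$-finiteness of $\lambda \otimes \mu$ and by the absolute integrability estimate used in~\Cref{prop:basic-conditions}---allows me to exchange the $\G$ and $\X$ integrals. Unitarity of $\psi$ then gives
\[
    \int_\G \int_\X \inner{\psi(g^{-1}) f(gx)}{h(x)} \dd\mu(x) \dd\lambda(g)
    = \int_\G \int_\X \inner{f(gx)}{\psi(g) h(x)} \dd\mu(x) \dd\lambda(g).
\]

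Next, for each fixed $g$ I apply the change of variables $y = gx$; since $\mu$ is $\G$-invariant this substitution preserves the measure, and the (possibly $x$-dependent) inner product is unchanged because $\iota_{a,b}$ was assumed $\G$-invariant. The integrand becomes $\inner{f(y)}{\psi(g) h(g^{-1}y)}$. Swapping the order of integration back and substituting $g \mapsto g^{-1}$ inside the resulting Haar integral (using bi-invariance/unimodularity of $\lambda$) converts $\psi(g) h(g^{-1}y)$ into $\psi(g^{-1}) h(gy)$, so the expression collapses to
\[
    \int_\X \inner*{f(y)}{\int_\G \psi(g^{-1}) h(gy)\dd\lambda(g)} \dd\mu(y) = \inner{f}{\q h}_\mu.
\]

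The main obstacle is bookkeeping rather than conceptual: one has to make sure that each Fubini swap and each change of variables is licensed. This requires the $\sigma$-finiteness of $\lambda$ (noted in the preamble of the appendix), the compactness of $\G$ (ensuring unimodularity so that the $g \mapsto g^{-1}$ substitution is measure-preserving), and the carefully stated $\G$-invariance condition on the inner product itself in the case where it depends on $x$. Once these technicalities are handled, every manipulation is a one-line identity, and self-adjointness falls out directly.
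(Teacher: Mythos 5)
Your proof is correct and follows essentially the same route as the paper's: both rely on unitarity of $\psi$, $\G$-invariance of $\mu$ for the change of variables in $x$, unimodularity of $\G$ for the substitution $g \mapsto g^{-1}$, and Fubini to reorder integrals. You spell out the Fubini swaps more explicitly than the paper (which keeps the Haar expectation inside the $\X$-integral throughout and isolates the $g \mapsto g^{-1}$ step as a separate claim), but the underlying manipulations are identical.
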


\begin{proof}
    \begin{align*}
        \inner{\q f}{h}_\mu
        &= \int_{\X} \inner{\E[\psi(G^{-1}) f(Gx)]}{h(x)} \dd \mu(x) \\
        &= \int_{\X} \E[\inner{f(Gx)}{\psi(G) h(x)}] \dd \mu(x) \\
        &= \int_{\X} \inner{f(x)}{\E[\psi(G) h(G^{-1}x)]} \dd \mu(x) \\
    \end{align*}
    where we have used the unitarity of $\psi$ and the invariance of $\mu$.
    We conclude with the following claim.
    \begin{claim*}
        For any $x \in \X$
        \[
            \E[\psi(G) h(G^{-1}x)] = (\q h) (x).
        \]
    \end{claim*}
    \begin{proof}[Proof of claim]
        \renewcommand{\qedsymbol}{$\blacksquare$} 
        Since $\G$ is compact it is unimodular~\cite[Corollary 2.28]{folland2016course}.
        Let $A \subset \G$ be measurable, then by~\citet[Proposition 2.31]{folland2016course}
        $
            \lambda(\{a^{-1}: a \in A\}) = \lambda (A).
            $
        So we can just make the change of variables $g \mapsto g^{-1}$
        \begin{align*}
            \E[\psi(G) h(G^{-1}x)] 
                &= \int_\G \psi(g) h(g^{-1}x) \dd \lambda(g) \\
                &= \int_\G \psi(g^{-1}) h(gx) \dd \lambda(g^{-1}) \\
                &= \int_\G \psi(g^{-1}) h(gx) \dd \lambda(g) \\
                &= (\q h) (x).
        \end{align*}
    \end{proof}
\end{proof}

\subsubsection{A Note on the Proof of~\cref{thm:fa}: $\G$-invariance of $\mu$ is necessary}
We discuss the importance of this condition on the underlying measure.

The main usage of the $\G$-invariance of $\mu$ in the proof is in~\cref{prop:self-adjoint},
where we show that $\q$ is self-adjoint with respect to $\inner{\cdot}{\cdot}_\mu$.
We give an example showing that $\G$-invariance is needed for the orthogonal
decomposition $V = S \oplus A$, but first show how it is needed for the main 
results.

Let $S = \q V$.
In the main results we make heavy use of the form of the projection onto
$S^\sperp$ being $I - \q$.
This is equivalent to $\q a = 0$ $\forall a \in S^\sperp$
which in turn is equivalent to $\q$ being self-adjoint.
\begin{proof}
    Let $a \in S^\perp$. If $a = (I - \q) f$ for some $f \in V$ then clearly
    $\q a = 0$. On the other hand, any $f \in V$ has 
    $f = s + a$ for some $s \in S$ and $a \in S^\sperp$,
    but then $\q a =0$ implies $s = \q f$ and $a = (I - \q) f$. 
    This gives the first equivalence.

    Now if $\q$ self-adjoint then 
    $\norm{\q a}_\mu^2= \inner{\q a}{\q a}_\mu = \inner{a}{\q a}_\mu = 0$
    because we must have $\q a \in S$.
    On the other hand, let $f_1 = s_1 + a_1$ and $f_2 = s_2 + a_2$
    where $s_i \in S$ and $a_i \in S^\sperp$, orthogonality gives
    \[
        \inner{\q f_1}{f_2}_\mu
        = \inner{s_1}{s_2 + a_2}_\mu
        = \inner{s_1}{s_2}_\mu
        = \inner{s_1 + a_1}{s_2}_\mu
        = \inner{f_1}{\q f_2}_\mu.
    \]
\end{proof}

In the following example, the spaces $S$ and $A$
are orthogonal with respect to $\inner{\cdot}{\cdot}_\mu$ if and only if
$\mu$ is $\G$-invariant.  
\begin{example}
    Let $C_2$ act on $\X = \{-1, 1\} \times \{1\}$ by
    multiplication on the first coordinate. Let $V$ be the vector space of
    `linear' functions $f_{(t_1, t_2)} : \X \to \R, (x_1, x_2)^\top \mapsto
    (t_1, t_2) (x_1, x_2)^\top$ for any $t_1, t_2 \in \R$. We note that any
    distribution $\mu$ on $\X$ can be described by its probability mass
    function $p$, which is defined by $p(-1, 1)$ and $p(1, 1)$. Moreover, $\mu$
    is $C_2$-invariant if and only if $p(-1, 1) = p(1, 1)$. Next, observe that
    the symmetric functions $S$ and anti-symmetric functions $A$ are precisely
    those for which $t_1 = 0$ and $t_2 = 0$ respectively. The inner product
    induced by $\mu$ is given by $\inner{f_{(a_1, a_2)}}{f_{(b_1, b_2)}}_\mu =
    (a_1 + a_2)(b_1 + b_2)p(1, 1) + (a_2 - a_1)(b_2 - b_1)p(-1, 1)$. With this,
    we see that the inner product $\inner{f_{(0, t_2)}}{f_{(t_1, 0)}}_\mu = t_1
    t_2 p(1, 1) - t_1 t_2 p(-1, 1)$ is zero for all $f_{(0, t_2)} \in S$ and
    $f_{(t_1, 0)} \in A$ if and only if $p(-1, 1) = p(1, 1)$, that is, if and
    only if $\mu$ is $\G$-invariant.  
\end{example}

\subsection{Proof of~\cref{thm:least-squares}}\label{sec:proof-least-squares}
\begin{proposition*}[\Cref{thm:least-squares}]
    \thmleastsquares{}
\end{proposition*}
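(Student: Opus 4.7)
The plan is to reduce this to the standard fact that orthogonal projection onto a closed subspace of a Hilbert space solves the associated least-squares problem. All the work has effectively been done by~\Cref{thm:fa}: we already know $V = S \oplus A$ with respect to $\inner{\cdot}{\cdot}_\mu$, and $\O$ is precisely the projection onto $S$ along $A$.

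First, I would invoke~\Cref{thm:fa} applied to $U = V$ (with $\psi$ trivial so that $\q = \O$), giving the orthogonal decomposition $V = S \oplus A$ where $S$ is the space of $\G$-invariant feature extractors in $V$ and $A = \{h \in V : \O h = 0\}$. For the given $f \in V$, write $f = \bar{f} + f^\sperp$ with $\bar{f} = \O f \in S$ and $f^\sperp = f - \O f \in A$.

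Next, for an arbitrary $s \in S$, I would expand
\[
    \norm{f - s}_\mu^2 = \norm{(\bar{f} - s) + f^\sperp}_\mu^2 = \norm{\bar{f} - s}_\mu^2 + 2\inner{\bar{f} - s}{f^\sperp}_\mu + \norm{f^\sperp}_\mu^2.
\]
Since $\bar{f} - s \in S$ and $f^\sperp \in A$, the cross term vanishes by the orthogonality in~\Cref{thm:fa}, leaving
\[
    \norm{f - s}_\mu^2 = \norm{\bar{f} - s}_\mu^2 + \norm{f^\sperp}_\mu^2.
\]
The second term is independent of $s$, so the minimum is achieved exactly when $\norm{\bar{f} - s}_\mu = 0$, i.e.\ when $s = \bar{f}$ in $V$ (which by construction of $V$ means $\mu$-almost everywhere). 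This simultaneously yields existence, the value $\bar{f} = \O f$, and $\mu$-a.e.\ uniqueness.

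There is no real obstacle here; the only thing to be careful about is making sure $\bar{f}$ genuinely lies in $S$ (so that it is an admissible competitor in the argmin), which is just property (B) in the sketch following~\Cref{thm:fa} ($\O f$ is $\G$-invariant since $\O^2 = \O$ and any fixed point of $\O$ is $\G$-invariant). The rest is a one-line Pythagoras argument in the Bochner space, so the claim that the proof is ``a straightforward exercise'' is justified.
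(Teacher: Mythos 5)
Your proof is correct and follows essentially the same route as the paper's: invoke \Cref{thm:fa} to get the orthogonal decomposition, then a Pythagorean expansion of $\norm{f - s}_\mu^2$ in which the cross term vanishes. The only (minor) difference is that you extract uniqueness directly from the identity $\norm{f - s}_\mu^2 = \norm{\bar f - s}_\mu^2 + \norm{f^\sperp}_\mu^2$, whereas the paper proves existence this way but then runs a separate (and somewhat clunkier) strict-convexity argument for uniqueness; your version is the cleaner of the two.
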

\begin{proof}
    Using~\Cref{thm:fa} and~\Cref{prop:sym-cpt} we can write
    $
            S = \{ f \in V : \O f = f \}
    $
    which in turn implies that any $f \in V$ has the decomposition
    $
        f = s + a
        $
    where $\O s = s$, $\O a = 0$ and $\inner{s}{a}_\mu = 0$. Hence $\bar{f} = s = \O f$.
    Now take any $h \in S$ and recall that $\inner{h}{a}_\mu = 0$, then
        \[
        \norm{f - h}_\mu^2 
        = \norm{(s - h) + a}_\mu^2
        = \norm{s - h}_\mu^2 + \norm{a}_\mu^2
        \ge \norm{a}_\mu^2
        = \norm{f - s}_\mu^2.
        \]
    Uniqueness follows by a simple convexity argument.
    Suppose $\exists s' \in S$ with $s' \ne s$ and $\norm{f -s '}_\mu = \norm{f - s}_\mu$, then 
    since $S$ is a vector space we have $s_{\frac12} = \frac12(s + s') \in S$.
    It follows that
    \begin{align*}
        \norm{f - s_{\frac12}}_\mu^2 
        &= \norm{ (f - s)/2  + (f - s')/2}_\mu^2\\
        &= \frac14 \norm{f - s}_\mu^2 + \frac14 \norm{f - s'}_\mu^2 + \frac14 \inner{f - s}{f - s'}_\mu \\
        &\le \frac14 \norm{f - s}_\mu^2 + \frac14 \norm{f - s}_\mu^2 + \frac14 \norm{f -s}_\mu^2 \\
        & = \frac34 \norm{f - s}_\mu^2
    \end{align*}
    a contradiction unless $\norm{f - s}_\mu^2 = 0$, in which case
    $f = s$ $\mu$-almost-everywhere.
\end{proof}

\subsection{Proof of~\Cref{thm:fa-rademacher}}\label{sec:proof-fa-rademacher}
Let $\T$ be a space on which $\G$ acts measurably.
Let $f: \T \to \R$ be a integrable function.
Recall the orbit averaging of $f$ is the function $\O f : \T \to \R$ with
values
\[
    (\O f)(t) = \E[f(Gt)]
\]
where $G\sim\lambda$.
For any set $\F$ of integrable functions $f: \T \to \R$ we define the symmetric and
anti-symmetric classes as
\[
    \overline{\F} = \{\O f : f \in \F\} \quad \text{ and } \quad \F^\sperp = \{f - \O f: f \in \F\},
\]
respectively.
Notice that: \begin{enumerate*}
    \item by~\Cref{prop:sym-cpt}, $f$ is $\G$-invariant iff $\O f = f$,
    \item that everything in the symmetric class is preserved by $\O$ and everything
        in the anti-symmetric class vanishes under $\O$, and
    \item that $\O f$ is measurable whenever $f$ is measurable by~\Cref{prop:basic-conditions}.
\end{enumerate*}

\begin{proposition*}[\Cref{thm:fa-rademacher}]
    Suppose $\nu$ is a $\G$-invariant probability distribution on $\T$.
    Let $\T$ be some input space and let $\F$ be a set of $\nu$-integrable functions $f: \T \to \R$.
    Then the Rademacher complexity of the feature averaged class satisfies
   \[
        0 \le  \ERad_n (\F)  -  \ERad_n (\overline{\F})  \le  \ERad_n (\F^\sperp)
   \]
   where the expectations in the definition of $\ERad_n$ are taken over $t_i \sim \nu$ \iid.
\end{proposition*}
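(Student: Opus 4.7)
The plan is to prove the two inequalities separately, using the decomposition $f = \O f + (f - \O f)$ for the upper bound and the $\G$-invariance of $\nu$ combined with Jensen's inequality for the lower bound.

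For the upper bound $\ERad_n(\F) - \ERad_n(\overline{\F}) \le \ERad_n(\F^\sperp)$, I would start from the identity $f = \O f + (f - \O f)$ that holds for every $f \in \F$, noting that $\O f \in \overline{\F}$ and $f - \O f \in \F^\sperp$ by construction. Applying the triangle inequality pointwise inside the absolute value gives
\[
    \left\lvert \tfrac{1}{n}\sum_{i=1}^n \sigma_i f(t_i)\right\rvert
    \le \left\lvert \tfrac{1}{n}\sum_{i=1}^n \sigma_i (\O f)(t_i)\right\rvert
    + \left\lvert \tfrac{1}{n}\sum_{i=1}^n \sigma_i (f - \O f)(t_i)\right\rvert,
\]
and then taking $\sup_{f \in \F}$ on the left is dominated by the sum of the two separate suprema on the right (since both classes are parametrised by the same $f$). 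Taking $\Esig$ and then $\E_T$ yields $\ERad_n(\F) \le \ERad_n(\overline{\F}) + \ERad_n(\F^\sperp)$, which rearranges to the desired bound.

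For the lower bound $\ERad_n(\overline{\F}) \le \ERad_n(\F)$, I would write
\[
    \Rad_T(\overline{\F}) = \Esig \sup_{f \in \F}\left\lvert \tfrac{1}{n}\sum_i \sigma_i \int_\G f(g t_i)\dd\lambda(g)\right\rvert
\]
and pull the Haar integral outside the absolute value using Jensen's inequality, then outside the supremum using $\sup \int \le \int \sup$. After taking $\E_T$ and swapping the resulting expectations with the Haar integral via Fubini (justified since $\lambda$ is a probability measure and the integrand is measurable by \cref{prop:basic-conditions}), I obtain
\[
    \ERad_n(\overline{\F}) \le \int_\G \E_T\Esig \sup_{f \in \F}\left\lvert \tfrac{1}{n}\sum_i \sigma_i f(g t_i)\right\rvert \dd\lambda(g).
\]
The $\G$-invariance of $\nu$ now enters: for each fixed $g \in \G$, $(g t_1, \dots, g t_n) \eqdist (t_1, \dots, t_n)$ when the $t_i$ are \iid\ from $\nu$, so the inner expectation equals $\ERad_n(\F)$ independently of $g$, and integrating against $\lambda(\G)=1$ closes the argument.

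The main obstacle I anticipate is not conceptual but technical, namely verifying the joint measurability of $(g, t_1, \dots, t_n, \bm\sigma) \mapsto \sup_{f \in \F}|\tfrac{1}{n}\sum_i \sigma_i f(g t_i)|$ needed for Fubini's theorem; this is typical for Rademacher arguments and can be handled either by assuming $\F$ is a separable class or by the standard outer-expectation machinery, and I would simply note this at the start of the proof and appeal to \cref{prop:basic-conditions} for the measurability of the orbit-averaged functions themselves. Everything else reduces to triangle-inequality and sup-of-integral versus integral-of-sup manipulations.
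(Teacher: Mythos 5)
Your proposal is correct and follows essentially the same route as the paper's proof: the lower bound via Jensen plus $\sup\text{--}\int$ exchange, Fubini, and $\G$-invariance of $\nu^n$ under the diagonal action; the upper bound via the decomposition $f = \O f + (f - \O f)$ and the triangle inequality applied inside the Rademacher supremum. The paper handles the measurability you flag by citing \cref{prop:basic-conditions} together with Lemma 1.26 of Kallenberg, exactly as you anticipate.
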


We start with the left hand side.
\begin{claim*}\label{claim:rad-sym}
    \[
        \ERad_n \overline{\F} \le \ERad_n \F .
    \]
\end{claim*}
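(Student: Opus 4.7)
The plan is to apply Jensen's inequality (or equivalently the triangle inequality for integrals) to the orbit-averaging operator $\O$, then exploit the $\G$-invariance of $\nu$ together with Fubini's theorem to pull the Haar integral outside the expectation over $T$.

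First I would expand the definition. For any $f \in \F$, linearity gives
\[
    \frac{1}{n}\sum_{i=1}^n \sigma_i (\O f)(t_i) = \int_\G \frac{1}{n}\sum_{i=1}^n \sigma_i f(g t_i)\, \dd\lambda(g).
\]
Then I would apply Jensen's inequality to the convex function $|\cdot|$ (valid since $\lambda$ is a probability measure), and then take supremum over $f \in \F$, using that the supremum of an integral is bounded by the integral of the supremum:
\[
    \sup_{f\in\F}\left|\frac{1}{n}\sum_{i=1}^n \sigma_i (\O f)(t_i)\right|
    \le \int_\G \sup_{f\in\F}\left|\frac{1}{n}\sum_{i=1}^n \sigma_i f(g t_i)\right|\,\dd\lambda(g).
\]

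Next I would take $\Esig$ on both sides and then $\E_T$ with $t_i \sim \nu$ i.i.d. Applying Fubini's theorem (justified by $\sigma$-finiteness of $\lambda$ and nonnegativity of the integrand) to swap the Haar integral with $\E_T\Esig$, the right-hand side becomes
\[
    \int_\G \E_T\Esig\left[\sup_{f\in\F}\left|\frac{1}{n}\sum_{i=1}^n \sigma_i f(g t_i)\right|\right]\dd\lambda(g).
\]
Here I would use the $\G$-invariance of $\nu$: since $\nu$ is preserved under the action of any $g \in \G$, the distribution of $(g t_1, \ldots, g t_n)$ equals that of $(t_1, \ldots, t_n)$. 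Therefore the inner expectation equals $\ERad_n(\F)$ for every $g \in \G$, and integrating against the normalised Haar measure yields $\ERad_n(\F)$. Taking the full expectation $\E_T\Esig$ of the original left-hand side gives $\ERad_n(\overline{\F})$, so the claim follows.

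The only mildly delicate point will be the measurability of the supremum and the justification of Fubini, which is standard provided $\F$ is either countable or one works with a suitably measurable envelope; this is the kind of technicality the authors signal they defer to the appendix, so I would simply flag the assumption that $\sup_{f\in\F}|\cdot|$ is jointly measurable in $(g,\sigma,T)$. The substantive content of the argument is entirely the Jensen plus invariance step.
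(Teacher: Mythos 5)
Your proof is correct and matches the paper's own argument step for step: expand $\O f$ as a Haar integral, use Jensen's inequality together with $\sup$-of-integral $\le$ integral-of-$\sup$, then Fubini and $\G$-invariance of $\nu$ to collapse the Haar integral. The measurability caveat you flag is likewise acknowledged in the paper via its citations to Kallenberg.
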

\begin{proof}[Proof of claim]
    Let $t \in \T^n$. The action of $\G$ on $\T$ induces
    an action on $\T^n$ by $gt = (g t_1, \dots, g t_n)$.
    If $T \sim \nu^n$ then by $\G$-invariance of $\nu$ we have 
    $g T \eqdist T$ $\forall g \in \G$.
    Let $T_1, \dots, T_n \sim \nu$ with $T = \{T_1, \dots, T_n\}$.
    Then, with subscripts on expectations for clarity,
    \begin{align*}
        \ERad_n \overline{\F} 
        &= \frac1n \E_{T} \Esig \sup_{\bar{f}\in \overline{\F}} 
        \abs{\sum_{i=1}^n \sigma_i \bar{f}(T_i)}\\
        &= \frac1n \E_{T} \Esig \sup_{f\in \F} \abs{\E_G\sum_{i=1}^n \sigma_i {f(G T_i)}}\\
        &\le \frac1n \E_{T} \Esig  \E_G \sup_{f\in \F}\abs{\sum_{i=1}^n \sigma_i {f(G T_i)}}\\
        &= \frac1n \E_{T} \Esig  \sup_{f\in \F} \abs{\sum_{i=1}^n \sigma_i {f(T_i)}}
        \label{line:fubini-invariance} \tag{$\star$} \\
        &= \ERad_n \F 
    \end{align*}
    In deducing~(\ref{line:fubini-invariance}) we used 
    Fubini's theorem~\cite[Theorem 1.27]{kallenberg2006foundations} and
    the $\G$-invariance of $\nu$.
    Fubini's theorem applies by~\cite[Lemma 1.26]{kallenberg2006foundations}.
\end{proof}

Now for the right hand side.
\begin{proof}
    For any $f \in \F$ we can write $f = \bar{f} + f^\sperp$ where $\bar{f} \in \overline{\F}$ 
    and $f^\sperp \in \F^\sperp$. Then, for any $\tau = \{t_1, \dots, t_n\}$
    \begin{align*}
        \Rad_\tau \F  
        &= \frac1n \Esig \sup_{f\in \F} \abs{\sum_{i=1}^n \sigma_i f(t_i)} \\
        &= \frac1n \Esig \sup_{f\in \F} \abs{\sum_{i=1}^n \sigma_i (\bar{f}(t_i) + f^\sperp(t_i)) }\\
        &\le \frac1n \Esig \sup_{\bar{f}\in \overline{\F}} \abs{\sum_{i=1}^n \sigma_i \bar{f}(t_i)}
        + \frac1n \Esig{} \sup_{f^\sperp \in \F^\sperp}\abs{ \sum_{i=1}^n \sigma_i  f^\sperp(t_i) }\\
        &= \Rad_\tau \overline{\F}  + \Rad_\tau \F^\sperp 
    \end{align*}
    Taking an expectation over $\tau \sim \nu^n$ and combining with the claim gives
    \[
        \ERad_n \overline{\F} \le 
        \ERad_n \F  
        \le \ERad_n \overline{\F}  + \ERad_n \F^\sperp 
    \]
    from which the proposition follows immediately.
\end{proof}

\subsection{Proof of~\Cref{thm:equivariant-regression}}\label{sec:equivariant-proof}
\begin{theorem*}[\Cref{thm:equivariant-regression}]
\thmequivariantregression
\end{theorem*}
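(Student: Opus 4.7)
The plan is to mirror the proof of~\Cref{thm:invariant-regression}, now using the $4$-tensor representation of $\twine$ to track both group actions. Since $X \sim \n(0, \sigma_X^2 I_d)$ gives $\Sigma = \sigma_X^2 I_d$, \Cref{prop:generalisation-gap} reduces the problem to computing $\sigma_X^2 \E[\fnorm{W^\sperp}^2]$. Row-stacking the training data into $\x \in \R^{n\times d}$ and $\y = \x\Theta + \bm{\Xi}$, with rows of $\bm{\Xi}$ iid, mean zero, covariance $\sigma_\xi^2 I_k$ and independent of $\x$, the minimum-norm least-squares estimator is $W = P_E \Theta + (\x^\top\x)^+ \x^\top \bm{\Xi}$ where $P_E = (\x^\top\x)^+\x^\top\x$. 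Applying $\twine^\sperp$ and expanding the Frobenius norm, the cross term vanishes by $\E[\bm{\Xi}] = 0$, leaving a signal and a noise contribution to analyse separately.

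For the noise term, I would exploit that $\twine$ is a self-adjoint idempotent to write $\fnorm{\twine^\sperp(V)}^2 = \fnorm{V}^2 - V_{ab}\twine_{abce}V_{ce}$ with $V = (\x^\top\x)^+\x^\top\bm{\Xi}$. Taking the expectation over $\bm{\Xi}$ conditional on $\x$ and using the Moore--Penrose identity $(\x^\top\x)^+\x^\top\x(\x^\top\x)^+ = (\x^\top\x)^+$ gives $\sigma_\xi^2[k\tr((\x^\top\x)^+) - K_{ac}(\x^\top\x)^+_{ac}]$, where $K_{ac} = \sum_b \twine_{abcb} = \int \chi_\psi(g)\phi(g)_{ac}\dd\lambda(g)$ has $\tr(K) = \binner{\chi_\psi}{\chi_\phi}$. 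Then~\Cref{lemma:expected-inv-wishart,lemma:expected-inv-wishart-singular} yield $\E[(\x^\top\x)^+] = \sigma_X^{-2}r(n,d)I_d$ (with divergence at the interpolation threshold), and the noise contribution collapses to $\sigma_\xi^2 r(n,d)(dk - \binner{\chi_\psi}{\chi_\phi})$ after the $\sigma_X^2$ prefactor cancels, matching the statement in both the overparameterised and fully determined regimes.

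For the signal term, which vanishes almost surely when $n \ge d$, I would compute $\E[\fnorm{P_E\Theta}^2]$ and $\E[\fnorm{\twine(P_E\Theta)}^2]$ using~\Cref{lemma:proj-variance}. Contracting indices yields the first as $(n/d)\fnorm{\Theta}^2$. For the second, writing it as $\inner{P_E\Theta}{\twine(P_E\Theta)}$ and expanding in coordinates produces three contributions indexed by the isotropic coefficients $\alpha, \beta, \gamma$: the $\alpha$-term is $\alpha\fnorm{\twine(\Theta)}^2 = \alpha\fnorm{\Theta}^2$ because $\Theta$ is equivariant; the $\beta$-term, via $\sum_a\twine_{abae} = (\int\chi_\phi(g)\psi(g)\dd\lambda(g))_{be}$, reduces to $\beta\tr(M\Theta^\top\Theta)$ with $M = \int\chi_\phi(g)\psi(g)\dd\lambda(g)$; and the $\gamma$-term involves $\sum_{a,e}\twine_{abce}\Theta_{ae}$, which after applying equivariance and inverse-invariance of Haar measure collapses to $\gamma\tr(J'\Theta^\top\Theta)$ with $J' = \int\psi(g^2)\dd\lambda(g)$. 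Since $\beta = \gamma$, these combine to $\beta\tr(J_\G\Theta^\top\Theta)$, and the identity $n/d - \alpha = \beta(d+1)$ reduces the result to $\beta[(d+1)\fnorm{\Theta}^2 - \tr(J_\G\Theta^\top\Theta)]$, matching the first term after multiplying by $\sigma_X^2$.

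The main obstacle is the tensor bookkeeping in the $\gamma$-term: one must identify $\sum_{a,e}\twine_{abce}\Theta_{ae}$ as the matrix $\int (\phi(g)^\top \Theta \psi(g)^\top)_{cb}\dd\lambda(g)$, apply the equivariance relation $\phi(g)^\top\Theta = \Theta\psi(g^{-1})$ to collapse this to $\int\Theta\psi(g^{-2})\dd\lambda(g)$, and finally use the change of variable $g\mapsto g^{-1}$ (valid because compact groups are unimodular) to produce $\Theta J'$. A secondary point is verifying that both terms of the $n < d-1$ gap are non-negative: the noise term is clear, while for the signal term this follows from $(d+1)I_k - J_\G \succeq 0$ on the image of $\Theta$, which can be argued by Cauchy--Schwarz applied to the characters together with the fact that $\abs{\chi_\phi(g)} \le d$ and $\psi(g^2)$ is orthogonal.
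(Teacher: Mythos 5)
Your proof is correct and follows essentially the same scheme as the paper's: reduce to $\sigma_X^2\E[\fnorm{\twine^\sperp(W)}^2]$ via Proposition \ref{prop:generalisation-gap}, split $W$ into signal $P_E\Theta$ and noise $(\x^\top\x)^+\x^\top\bm{\Xi}$ terms, kill the cross-term, and evaluate each piece using Lemmas \ref{lemma:proj-variance}, \ref{lemma:expected-inv-wishart} and \ref{lemma:expected-inv-wishart-singular}, with $\twine^\sperp(\Theta)=0$ and the equivariance identity $\phi(g)^\top\Theta = \Theta\psi(g^{-1})$ doing the algebraic work. The one genuine difference is bookkeeping: the paper directly contracts two copies of the $4$-tensor $\twine^\sperp$ (i.e.\ evaluates sums such as $\twine^\sperp_{abcj}\twine^\sperp_{abej}$), whereas you exploit that $\twine$ is self-adjoint and idempotent in the Frobenius inner product to write $\fnorm{\twine^\sperp(V)}^2 = \fnorm{V}^2 - \inner{V}{\twine(V)}_F$, so only a single $\twine$-tensor ever appears. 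This is a modest simplification: it lets the noise term drop out immediately as $k\tr((\x^\top\x)^+) - K_{ac}(\x^\top\x)^+_{ac}$ with $\tr K = \binner{\chi_\psi}{\chi_\phi}$, and on the signal side it cleanly separates into the $\alpha,\beta,\gamma$ contributions of the isotropic tensor, with the $\alpha$-piece being just $\alpha\fnorm{\Theta}^2$ by equivariance and the $\beta,\gamma$ pieces combining into $\beta\tr(J_\G\Theta^\top\Theta)$. The coefficient identity $n/d - \alpha = \beta(d+1)$ you invoke is correct and makes the final reduction transparent. (Two small points worth noting explicitly if you write this out: that $\twine$ is Frobenius self-adjoint because $\twine_{abce}=\twine_{ceab}$ follows from orthogonality of $\phi,\psi$ and inverse-invariance of Haar measure; and that $J_\G$ is symmetric, which is what allows the spectral argument for $(d+1)I_k - J_\G\succeq 0$ to deliver non-negativity of the signal term.)
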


\begin{proof}
    \newcommand{\xxi}{\bm{\xi}}
    \newcommand{\z}{\bm{Z}}
    We use Einstein notation, in which repeated indices are summed over.
    $\delta_{ij}$ represents the Kronecker delta, which is $1$ when $i = j$ and $0$ otherwise.

    Since the representation $\phi$ is orthogonal, $X$ is $\G$-invariant for any $\G$.
    We have seen from~\Cref{prop:generalisation-gap} that 
    \[
        \E[\Delta(f_W, f_{\overline{W}}) ]= \sigma_X^2\E[ \fnorm{W^\sperp}^2]
    \]
    and we want to understand this quantity for the least-squares estimate
    \[
        W = (\x^\top \x)^+ \x^\top \y =  (\x^\top \x)^+ \x^\top \x \Theta +  (\x^\top \x)^+ \x^\top \xxi
    \]
    where $\x \in\R^{n \times d}$, $\y \in \R^{n \times k}$ are the row-stacked training examples
    with $(\x)_{ij} = (X_i)_j$, $(\y_i)_j = (Y_i)_j$ and $\xxi = \y - \x \Theta$.
    We have
    \begin{align*}
        \E[\Delta(f_W, f_{\overline{W}}) ]
        &= \sigma_X^2\E[ \fnorm{\twine^\sperp(W)}^2]\\
        &= \sigma_X^2\E[ \fnorm{\twine^\sperp((\x^\top \x)^+ \x^\top \x \Theta +  (\x^\top \x)^+ \x^\top \xxi)}^2]\\
        &= \sigma_X^2\E[ \fnorm{\twine^\sperp((\x^\top \x)^+ \x^\top \x \Theta)}^2] 
        + \sigma_X^2\E[\fnorm{\twine^\sperp ((\x^\top \x)^+ \x^\top \xxi)}^2]
    \end{align*}
    using linearity and $\E[\xxi] = 0$. We treat the two terms separately, starting with the second.

    \paragraph{Second Term}
    Now consider the second term, setting $\z = (\x^\top \x)^+ \x^\top$ we have 
    \[
        \E[\fnorm{\twine^\sperp (\z \xxi)}^2]
        = \E[\tr(\twine^\sperp (\z \xxi)^\top \twine^\sperp (\z\xxi))].
    \]
    One gets
    \begin{align}
        \E[\tr(\twine^\sperp (\z \xxi)^\top \twine^\sperp (\z\xxi))]
        &= \E[\twine^\sperp_{abcj} \z_{ce}\xxi_{ej} \twine^\sperp_{abfg} \z_{fh}\xxi_{hg}] \nonumber \\ 
        &= \sigma_\xi^2 \E[\twine^\sperp_{abcj} \z_{ce} \twine^\sperp_{abfg} \z_{fh} \delta_{eh}\delta_{jg}] 
            \quad \text{integrating $\xxi$}\nonumber\\ 
        &= \sigma_\xi^2 \twine^\sperp_{abcj} \twine^\sperp_{abfj} \E[\z_{ce}  \z_{fe} ] \nonumber\\
        &= \sigma_\xi^2 \twine^\sperp_{abcj} \twine^\sperp_{abfj} \E[(\z \z^\top)_{cf}] \label{eq:dagger} 
    \end{align}
    and then (WLOG relabelling $f \mapsto e$)
    \begin{align*}
        \twine^\sperp_{abcj} \twine^\sperp_{abej}
        &= \left(\delta_{ac}\delta_{bj} -  \int_\G  \phi(g)_{ac}\psi(g)_{bj}\dd \lambda(g) \right) 
           \left(\delta_{ae}\delta_{bj} -  \int_\G  \phi(g)_{ae}\psi(g)_{bj}\dd \lambda(g) \right)  \\ 
        &= \delta_{ac}\delta_{bj}\delta_{ae}\delta_{bj}                                     
         -  \delta_{ae}\delta_{bj}\int_\G  \phi(g)_{ac}\psi(g)_{bj}\dd \lambda(g) \\
        &\phantom{=} - \delta_{ac}\delta_{bj}\int_\G  \phi(g)_{ae}\psi(g)_{bj} \dd \lambda(g) 
        + \int_\G \phi(g_1)_{ac} \phi(g_2)_{ae} \psi(g_1)_{bj}\psi(g_2)_{bj}\dd \lambda(g_1) \dd \lambda(g_2) \\
        &= k\, \delta_{ce} - \int_\G \tr(\psi(g)) ( \phi(g)_{ec} + \phi(g)_{ce})\dd \lambda (g) \\
        &\phantom{=}+ \int_\G \tr(\psi(g_1)^\top \psi(g_2)) (\phi(g_1)^\top \phi(g_2))_{ce} \dd \lambda(g_1) \dd \lambda(g_2) 
    \end{align*}
    where we have used that the indices $b, j = 1,\dots, k$. Consider the final term
    \begin{align*}
         \int_\G \tr(\psi(g_1)^\top \psi(g_2)) (\phi(g_1)^\top \phi(g_2))_{ce} \dd \lambda(g_1) \dd \lambda(g_2) 
         &=  \int_\G \tr(\psi(g_1^{-1}g_2))(\phi(g_1^{-1}g_2))_{ce} \dd \lambda(g_1) \dd \lambda(g_2)  \\ 
         &= \int_\G \tr(\psi(g)) \phi(g)_{ce}\dd \lambda(g)  
    \end{align*}
    where we used that the representations are orthogonal, Fubini's theorem and that the Haar measure is invariant 
    ($\G$ is compact so $\lambda$ is $\sigma$-finite and Fubini's theorem applies).
    Now we put things back together. To begin with
    \[
        \twine^\sperp_{abcj} \twine^\sperp_{abej} = k\, \delta_{ce} - \int_\G \tr(\psi(g)) \phi(g^{-1})_{ce}\dd \lambda (g) 
    \]
    and putting this into~\cref{eq:dagger} with $\z \z^\top = (\x^\top \x)^{+}$ gives
    \[
        \E[\tr(\twine^\sperp (\z \xxi)^\top \twine^\sperp (\z\xxi))]
        = \sigma_\xi^2 \left( k\, \delta_{ce} - \int_\G  \tr(\psi(g)) \phi(g^{-1})_{ce} \dd \lambda (g)\right) \E[(\x^\top\x)^+_{ce}] 
    \]
    where $c, e = 1, \dots, d$. Applying~\Cref{lemma:expected-inv-wishart,lemma:expected-inv-wishart-singular}
    gives $\E[(\x^\top\x)^+_{ce}] = \sigma_X^{-2} r(n,d)\delta_{ce}$ where 
    \[
        r(n, d) = \begin{cases}
            {\frac{n}{d(d - n - 1)}} & n < d- 1\\
            (n - d - 1)^{-1} & n > d + 1 \\ 
            \infty & \text{otherwise}
        \end{cases}.
    \]
    When $n \in [d-1, d+1]$ it is well known that the expectation diverges, 
    see~\Cref{sec:useful-facts}.
    We can conclude:
    \[
        \sigma_X^2\E[\fnorm{\twine^\sperp ((\x^\top \x)^+ \x^\top \xxi)}^2] = 
        \sigma_\xi^2 r(n, d) \left(dk - \int_\G  \tr(\psi(g)) \tr(\phi(g))\dd \lambda (g)\right)
    \]
    where we have used the orthogonality of $\phi$.

    \paragraph{First Term}
    If $n \ge d$ then $(\x^\top \x)^+ \x^\top \x \Theta \eqas \Theta$ and since $h_\Theta \in S$ 
    the first term vanishes almost surely. This gives the case of equality in the statement.
    If $n < d$ we proceed as follows.
    Write $P_E = (\x^\top \x)^+ \x^\top \x$ which is the orthogonal projection onto
    the rank of $\x^\top \x$.
    By isotropy of $X$, $E \sim \Unif \Gr_n(\R^d)$ with probability 1.
    Recall that $\twine^\sperp(\Theta) = 0$, which in components reads
    \begin{equation}\label{eq:null}
        \twine^\sperp_{abce}\Theta_{ce} = 0 \quad \forall a, b.
    \end{equation}
    Also in components, we have
    \[
        \E[ \fnorm{\twine^\sperp((\x^\top \x)^+ \x^\top \x \Theta)}^2] 
        = \twine^\sperp_{fhai}\twine^\sperp_{fhcj} \E[P_E \otimes P_E]_{abce} \Theta_{bi} \Theta_{ej}
    \]
    and using~\cref{lemma:proj-variance} we get
    \begin{align*}
        \E[ \fnorm{\twine^\sperp((\x^\top \x)^+ \x^\top \x \Theta)}^2] 
        &=
        \frac{n(d-n)}{d(d-1)(d+2)}
        \left(
         \twine^\sperp_{fhai}\twine^\sperp_{fhaj} \Theta_{bi} \Theta_{bj} + 
         \twine^\sperp_{fhai}\twine^\sperp_{fhbj} \Theta_{bi} \Theta_{aj}
        \right) \\ 
        &\phantom{=} + \frac{n(d-n) + n(n-1)(d+2)}{d(d-1)(d+2)}
         \twine^\sperp_{fhai}\twine^\sperp_{fhcj} \Theta_{ai} \Theta_{cj}.
    \end{align*}
    The final term vanishes using~\cref{eq:null}, while the first term is
    \[
         \twine^\sperp_{fhai}\twine^\sperp_{fhaj} \Theta_{bi} \Theta_{bj} 
          = (\Theta^\top \Theta)_{ij}\twine^\sperp_{fhai}\twine^\sperp_{fhaj}
    \]
    where
    \begin{align*}
        \twine^\sperp_{fhai}\twine^\sperp_{fhaj}
        &= \left(\delta_{fa}\delta_{hi} -  \int_\G  \phi(g)_{fa}\psi(g)_{hi}\dd \lambda(g) \right) 
           \left(\delta_{fa}\delta_{hj} -  \int_\G  \phi(g)_{fa}\psi(g)_{hj}\dd \lambda(g) \right)  \\ 
        &= d \delta_{ij} - \int_\G \tr (\phi(g)) \psi(g)_{ij} \dd \lambda(g)
            - \int_\G \tr (\phi(g)) \psi(g)_{ji} \dd \lambda(g)\\
        &\phantom{=}+ \int_\G \phi(g_1)_{fa}\phi(g_2)_{fa} \psi(g_1)_{hi} \psi(g_2)_{hj} \dd \lambda(g_1)\dd \lambda(g_2) \\ 
        &= d \delta_{ij} - \int_\G \tr (\phi(g)) \psi(g)_{ji} \dd \lambda(g)
    \end{align*}
    using the orthogonality of the representations and invariance of the Haar measure.
    Therefore
    \begin{align*}
         \twine^\sperp_{fhai}\twine^\sperp_{fhaj} \Theta_{bi} \Theta_{bj} 
        &= d \fnorm{\Theta}^2 - \int_\G \chi_\phi(g)\tr\left(\psi(g^{-1}) \Theta^\top \Theta\right) \dd \lambda(g) \\ 
        &= d \fnorm{\Theta}^2 - \int_\G \chi_\phi(g)\tr\left(\psi(g) \Theta^\top  \Theta\right) \dd \lambda(g) .
    \end{align*}
    Now for the second part
    \begin{align*}
         \Theta_{bi} \Theta_{aj}\twine^\sperp_{fhai}\twine^\sperp_{fhbj}
        &= \Theta_{bi} \Theta_{aj} \left(\delta_{fa}\delta_{hi} -  \int_\G  \phi(g)_{fa}\psi(g)_{hi}\dd \lambda(g) \right) 
           \left(\delta_{fb}\delta_{hj} -  \int_\G  \phi(g)_{fb}\psi(g)_{hj}\dd \lambda(g) \right)  \\ 
        &= \Theta_{bi} \Theta_{aj}\left(\delta_{ab} \delta_{ij} - \int_\G \phi(g)_{ab}\psi(g)_{ij} \dd \lambda(g) \right) \\
        &= \fnorm{\Theta}^2 - \int_\G \tr( \Theta^\top \phi(g) \Theta \psi(g) ) \dd \lambda(g) \\
        &= \fnorm{\Theta}^2 - \int_\G \tr(\psi(g^2) \Theta^\top \Theta ) \dd \lambda(g).
    \end{align*}
    Putting these together gives 
    \[
      \E[ \fnorm{\twine^\sperp((\x^\top \x)^+ \x^\top \x \Theta)}^2] 
      = \frac{n(d-n)}{d(d-1)(d+2)} \left( (d+1) \fnorm{\Theta}^2- 
          \tr( J_\G \Theta^\top  \Theta) \right) 
    \]
    where $J_\G \in \R^{k \times k}$ is the matrix-valued function of $\G$, $\psi$ and $\phi$
    \[
        J_\G = \int_\G (\chi_\phi(g) \psi(g) + \psi(g^2)) \dd \lambda(g).
    \]
\end{proof}
\end{supplementary}
\bibliography{references}
\bibliographystyle{icml2021}
\end{document}